\newtheorem{lemma}{Lemma}
\newtheorem{objective}{Objective}
\newcommand{\var}{\mathrm{Var}}
\DeclareMathOperator*{\Expect}{\mathbb{E}}
\newcommand{\linebreakand}{%
  \end{@IEEEauthorhalign}
  \hfill\mbox{}\par
  \mbox{}\hfill\begin{@IEEEauthorhalign}
}
\definecolor{duong}{RGB}{0, 0, 0} % {0, 0, 255} % 
\definecolor{label1}{RGB}{149, 230, 255}
\definecolor{label2}{RGB}{255, 230, 149}
\def\eqref#1{Eq.~(\ref{#1})}
\def\1{\bm{1}}
\DeclareMathAlphabet{\mathsfit}{\encodingdefault}{\sfdefault}{m}{sl}
\SetMathAlphabet{\mathsfit}{bold}{\encodingdefault}{\sfdefault}{bx}{n}
\newcommand{\E}{\mathbb{E}}
\DeclareMathOperator*{\argmin}{arg\,min}
\begin{document}
% \title{Knowledge Abstraction for Semantic Communication: An Causality Invariant Learning Approach}
% \title{Knowledge-based Semantic Communication:\\ An Causality Invariant Learning Approach}
\title{Knowledge Abstraction for Knowledge-based Semantic Communication: A Generative Causality Invariant Approach}

\author{Minh-Duong Nguyen, Quoc-Viet Pham, \textit{Senior Member, IEEE}, Nguyen~H.~Tran, \textit{Senior Member, IEEE,}, Hoang-Khoi~Do, Duy T. Ngo, \textit{Senior Member, IEEE}, Won-Joo Hwang, \textit{Senior Member, IEEE}

% \thanks{$^\diamondsuit$ .}
\thanks{Minh-Duong Nguyen is with the Department of Intelligent Computing and Data Science, VinUniversity, Hanoi, Vietnam (e-mail: mduongbkhn@gmail.com).}
\thanks{Hoang-Khoi Do and Quoc-Viet Pham are with the School of Computer Science and Statistics, Trinity College Dublin, Dublin 2 D02PN40, Ireland (e-mail: \{dokh, viet.pham\}@tcd.ie).}
\thanks{Nguyen~H.~Tran is with the School of Computer Science, The University of Sydney, Darlington, NSW 2006, Australia (e-mail: nguyen.tran@sydney.edu.au).}
\thanks{Duy T. Ngo is with the School of Engineering, The University of Newcastle, Callaghan, NSW 2308, Australia (e-mail: duy.ngo@newcastle.edu.au).}
\thanks{Won-Joo Hwang (corresponding authors) is with the Department of Information Convergence Engineering, Pusan National University, Busan 46241, Republic of Korea (email: wjhwang@pusan.ac.kr).}
% \thanks{This paragraph will include the Associate Editor who handled your paper.}
}

\maketitle
\begin{abstract}
In this study, we design a low-complexity and generalized AI model that can capture common knowledge to improve data reconstruction of the channel decoder for semantic communication.
Specifically, we propose a generative adversarial network that leverages causality-invariant learning to extract causal and non-causal representations from the data. 
Causal representations are invariant and encompass crucial information to identify the data's label.
They can encapsulate semantic knowledge and facilitate effective data reconstruction at the receiver.
Moreover, the causal mechanism ensures that learned representations remain consistent across different domains, making the system reliable even with users collecting data from diverse domains.
As user-collected data evolves over time causing knowledge divergence among users, we design sparse update protocols to improve the invariant properties of the knowledge while minimizing communication overheads.
Three key observations were drawn from our empirical evaluations. Firstly, causality-invariant knowledge ensures consistency across different devices despite the diverse training data. Secondly, invariant knowledge has promising performance in classification tasks, which is pivotal for goal-oriented semantic communications. Thirdly, our knowledge-based data reconstruction highlights the robustness of our decoder, which surpasses other state-of-the-art data reconstruction and semantic compression methods in terms of Peak Signal-to-Noise Ratio (PSNR).
\end{abstract}
\begin{IEEEkeywords}
Data Reconstruction, Knowledge-based Semantic Communication, Invariance Learning, Knowledge Retrieval.
\end{IEEEkeywords}

\section{Introduction}
% With the widely deployed connected devices, Internet of Things (IoT) networks are providing an increasing number of intelligent services, i.e., smart homes, intelligent manufacturers, and smart cities \cite{2018-IoT-Survey2}. The resultant exponential growth in the number of IoT devices has led to significant challenges to traditional communication systems, including resource scarcity, network congestion, and the need for scalable edge intelligence connectivity. 
The interconnected nature of the modern world enables innovative services, such as virtual reality, mobile immersive extended reality, and autonomous driving. These advancements introduce significant challenges for communication systems, such as resource scarcity, network traffic congestion, and the need for scalable connectivity to support edge intelligence.
Semantic Communications (SemCom) has recently been considered a promising technology \cite{sun2025edge, liu2025multi, 2024-SemCom-Survey2} to address these issues. In contrast to conventional communications, SemCom aims to convey the meaning of information by selectively extracting and transmitting only pertinent information relevant to recipients' tasks. Consequently, one of the most crucial missions for SemCom is improving the efficiency of data compression and reconstruction. 

% \subsection{Related Works and Analysis}
\textbf{Related Works and Challenges}: Existing works in SemCom can be categorized by the types of information encoding and decoding. One category focuses on improving the efficiency of semantic encoders and decoders. This approach emphasizes data compression while ensuring sufficient information is retained for accurate reconstruction at the receiver's end \cite{2023-FL-HCFL, 2024-SemCom-DRGO, yu2025multi}. For example, in DeepSC \cite{2021-SEM-DeepSC}, a transformer-based architecture was designed to improve data compression and reconstruction. U-DeepSC \cite{2023-SemCom-UDeepSC} and MUDeepSC \cite{2022-SemCom-MUDeepSC} were developed as upgraded versions, in which layer-wise knowledge transfer technique is utilized to improve information exchange across tasks, facilitating the extraction of semantic information from both texts and images. 
Mem-DeepSC \cite{2023-SemCom-MemDeepSC} deploys the memory on the receiver to further improve the data reconstruction of DeepSC.

The studies presented in \cite{2022-SEM-AdaptableSemanticCompression, 2020-SemCom-DJSCCF, 2019-SemCOm-DJSCC-WIT, 2024-SemCom-AdaSem} introduced adaptable semantic compression methods for a semantic encoder-decoder system which possesses the ability to assess latent representations. By doing so, data features that have the least impact on predicting the task can be eliminated. 
% However, while these approaches offer advancements in semantic compression efficiency, they unavoidably entail substantial information loss, thereby inhibiting the dramatic evolution of SemCom. 
SemCC \cite{2024-SemCom-SemCC} employed contrastive learning to extract salient features from data, optimizing these features for transmission over wireless channels.
DeepMA \cite{2024-SemCom-DeepMA} extracted orthogonal semantic symbol vectors from the data to achieve orthogonal multiple access in SemCom systems.
GenerativeJSCC \cite{2023-SemCom-GenerativeJSCC} integrates StyleGAN-2 and employs a strategy that leverages pre-trained AI models to enhance data reconstruction efficiency. Despite its robustness, GenerativeJSCC performs effectively only when the dataset aligns with the pre-trained model, limiting its ability to generalize to new tasks or domains.

Another category in SemCom is knowledge-based SemCom \cite{2024-SemCom-Survey2}. 
Currently, there are three popular approaches to designing knowledge-aided semantic communication (SemCom): (1) using channel feedback as common knowledge, (2) leveraging prior knowledge, and (3) employing knowledge extractors, either through deterministic algorithms or knowledge graph generators.
First, the methods utilizing channel feedback as prior knowledge \cite{2024-SemCom-AdaSem, 2024-SemCom-SCAN}, enable the encoder and decoder to collaborate simultaneously to achieve efficient data reconstruction. \cite{2023-SemCom-DeepJSCCV} introduced OraNet, a network designed to predict the quality of reconstructed images by jointly considering channel SNR, compression rate, and semantic code.
\cite{2023-SemCom-VLSCC} introduced VL-SCC which is adaptable to various SemCom systems, with the rate-allocation scheme learned end-to-end using proxy functions.
From the perspective of prior knowledge, \cite{2023-SemCom-TaskUnaware} leveraged the generative Artificial Intelligence (AI) model to generate pseudo data to support the data reconstruction at the receivers. This generative model requires additional knowledge designed by humans, which is \emph{time consuming and expensive}. 
JCM \cite{2024-SemCom-JCM} assumes that a shared dataset is available as common knowledge for both the encoding and decoding processes. However, this approach is inherently reliant on the specific dataset, making it \emph{difficult to generalize} to other datasets or applications.
% Authors in \cite{2023-SemCom-OLANTSCC} propose online learned adaptive NTSCC, .
From the perspective of knowledge extractor, current approaches often depend on task-specific algorithms (e.g., Natural Language Processing (NLP) \cite{2022-KG-LogicalQueryAnswering, 2022-KG-RelationExtraction, 2022-SemCom-AdaptiveBitRate}), or employ complex Large Language Models (LLMs) \cite{2023-IKR-T2IPR, 2023-IKR-CIRCRN, 2023-IKR-Reveal}). These approaches have \emph{limited applicability across tasks} and are unsuitable for mobile communication systems due to \emph{their high computational complexity and resource consumption}.

% In summary, designing an appropriate knowledge-aided SemCom system should satisfy the following criteria.
% \begin{criterion}[Domain Generalization]\label{crit:domain-generalization}
%     The knowledge abstraction possesses applicability and transferability across the datasets of distributed devices in a generalized manner.
% \end{criterion} 
% \begin{criterion}[Lifelong Learning Capability]\label{crit:lifelong}
%     The extracted semantic knowledge should have the capacity for continual improvement. Another speaking, the extracted knowledge is capable of adapting to the dynamics of the dataset.
% \end{criterion} 
% \begin{criterion}[Automated Knowledge Retrieval]\label{crit:automatic-knowledge}
%     The semantic knowledge extraction process should be entirely automated to eliminate the need for human involvement.
% \end{criterion}
% \begin{criterion}[Low Computation Complexity]\label{crit:low-complexity}
%     The knowledge extractor should generate semantic knowledge directly without the need for intermediate tasks (e.g., NLP, LLM) that introduce unnecessary computational overhead.
% \end{criterion}
% In general, the device's semantic knowledge lacks the capacity to generalize to new data instances that diverge from the entire system dataset. This limitation is due to the specialized nature of the knowledge abstraction technique, which is finely tuned through training in the specific domain of the source data. 
\textbf{Motivations}:
The current challenges in designing semantic knowledge stem from distinguishing data characteristics across various clients, which is known as domain shifts \cite{2022-DG-CIRL}. 
Traditional methods for achieving invariant representations often involve regularization to minimize domain gaps \cite{2022-DG-KLGuided, 2021-DG-IRL-DDT}. However, this strategy is impractical in wireless communications due to the communication overhead associated with the necessity of data sharing across the domains (i.e., clients). To address these issues, we consider structured causal model (SCM) \cite{2022-DG-InvariantRationale}, in which invariance is stable across domains. 
With this mechanism, we utilize causality invariance learning to extract invariant representations for semantic knowledge. 
\textbf{Contributions}:
In this paper, we propose Unified Knowledge retrieval via Invariant Extractor (UKIE), algorithm for extracting semantic knowledge. UKIE employs two primary loss functions for knowledge extraction: invariant learning loss and variant learning loss. The invariant learning loss ensures the representations are causally invariant. In contrast, the variant learning loss leverages data transmission to capture client-specific features.
Drawing inspiration from recent advancements in deep invariant representation encoding methods \cite{2020-DG-EntropyReg, 2018-DG-CIAN, 2021-DG-ExploitDomainSpecific} which utilize generative adversarial networks \cite{2014-ML-GAN} (GAN), we develop a GAN-based architecture and two discriminative losses to learn causality invariant representation. The first loss aids invariant representations in achieving classification accuracy, while the second adversarial loss helps to filter out irrelevant information. This GAN-based architecture enables us to focus on embedding the most meaningful information into the invariant representations without memorizing the knowledge into the discriminative classifier's parameters. 
In essence, our contributions comprise:
\begin{itemize}
    % \item We have conducted thorough theoretical analyses to validate the feasibility of utilizing invariance learning for effective semantic knowledge abstraction, while also assessing its robustness against distortions in lossy semantic compression.
    \item We introduce UKIE, a GAN-based causality-invariance learning architecture, to extract knowledge that is invariant across the data with the same label.
    \item We propose a novel architecture for knowledge-based SemCom that guarantees: 1) the extracted invariant knowledge is utilized as semantic knowledge to improve the data reconstruction efficiency at the receiver, and 2) sparse update protocols enable the capability of stable learning for invariant representations as the data from different clients evolve over time.
    \item We conduct a series of numerical experiments to evaluate the effectiveness of retrieving invariant knowledge and to compare the performance of UKIE with existing semantic compression approaches. Additionally, we perform numerical experiments on datasets with domain shifts to demonstrate the robustness of our invariant knowledge when applied to clients whose data exhibits significant feature variation. UKIE consistently outperforms baseline approaches, achieving over a $10\%$ improvement in both Peak Signal-to-Noise Ratio (PSNR) for data reconstruction and test accuracy in goal-oriented semantic communication tasks.
\end{itemize}
The remainder of this paper is organized as follows. Section~\ref{sec:preliminaries} presents the preliminaries and foundational techniques used in this work. Section~\ref{sec:system-model} describes the system model and overall architecture our proposed method. Section~\ref{sec:ukie-training} details the training procedure of the proposed UKIE framework. Section~\ref{sec:experimental-eval} provides experimental evaluations to assess the effectiveness of our approach and compare it with existing baselines. Finally, Section~\ref{sec:conclusion} concludes the paper.
\section{Backgrounds \& Preliminaries}\label{sec:preliminaries}
\subsection{Knowledge-aided Semantic Communication}
{\color{black}
Knowledge-aided SemCom extends traditional semantic communication by incorporating a shared knowledge base between the transmitter and receiver. Communication in this framework relies on either a pre-established or dynamically updated knowledge base that facilitates more efficient and meaningful information exchange. This shared knowledge may include ontologies (i.e., formal definitions of domain-specific concepts and their interrelations), knowledge graphs (i.e., structured representations of entities \cite{2022-SemCom-KG-Cognitive}, facts, and their relationships), or learned models (i.e., neural networks trained on domain data, capture semantic patterns useful for interpretation and reasoning), which enable context-aware encoding, transmission, and decoding of semantic content. By leveraging the shared knowledge, knowledge-aided SemCom enhances semantic alignment and enables communication systems to convey intent and meaning more effectively.

Currently, knowledge is integrated into SemCom systems through several distinct approaches. Firstly, human-defined knowledge or prompts can be employed as inputs to LLMs or text/image encoders to generate semantic representations. These knowledge-informed representations can then be incorporated into the latent features at the decoder side, thereby enriching the semantic understanding during message reconstruction. Secondly, knowledge can serve as a form of channel feedback from the receiver, capturing characteristics of communication environment. This feedback can be leveraged to enhance the encoding and decoding processes by adapting to dynamic channel conditions. Lastly, knowledge can be directly introduced as a shared input to both the encoder and decoder, ensuring that both ends of the communication system possess a common understanding, which facilitates more efficient and accurate semantic transmission.
}

\subsection{Conditional Generative AI}
{\color{black}
The foundational work of conditional Generative AI (GAI) originates from the Conditional Variational Autoencoder (CVAE)~\cite{2015-DL-CVAE}. The conditional GAI models a joint generative process comprising two main components. The first is an encoding process that learns a latent representation $z$ from the input data $x$, and is represented by the conditional distribution $p(z \vert x, u)$. The second is a decoding process that reconstructs the data $x$ from the latent variable $z$, modeled by the conditional distribution $p(x \vert z, u)$. The principal distinction between conditional GAI and its classical counterpart lies in the explicit use of conditioning variables $u$, which are drawn from a predefined set $\mathcal{U}$. These variables serve to guide the generation process more effectively than unconditioned models. Building upon the foundation laid by CVAE, conditional generative AI has since been widely adopted across a range of applications, yielding improved performance in comparison to classical generative models. Notable extensions include applications on CVAE \cite{CVAE-3DHuman, CVAE-Audio2Gestures, CVAE-ManifoldDimension}, Conditional Generative Adversarial Network (CGAN)~\cite{CGAN-I2I, CGAN-ImageGeneration, CGAN-ImageSynthesis} and the Guided Diffusion Model~\cite{CDM-ImageRecon, CDM-VideoSynthesis}. The incorporation of conditioning variables $c$ can further reduce the overall loss, as they can effectively substitute for certain active latent dimensions \cite{CVAE-ManifoldDimension}. 
}

{\color{black}
\subsection{Prototypical Networks}\label{sec:protonet}
Prototypical Networks (ProtoNet) \cite{snell2017prototypical} are a class of metric-based few-shot learning models introduced to address the challenge of learning from limited labeled examples. The core idea of ProtoNet is to represent each class by a prototype, which is typically the mean vector of embedded support examples for that class in a learned feature space. 
% Classification of query instances is then performed based on their distances to these class prototypes.
Let $f_{\theta_{K}}: \mathcal{X} \rightarrow \mathbb{R}^{d_{z_K}}$ be an embedding function parameterized by ${\theta_{K}}$, which maps an input $x \in \mathcal{X}$ to a $d_{z_K}$-dimensional vector in the feature space. This function is used to encode both support and query examples.
For each class $c$ in the support set $S$, the prototype $z_K$ is computed as the mean of the embedded support examples:
\begin{align}
    z^c_K = \frac{1}{|S_c|} \sum_{(x_i, y_i) \in S_c} f_{\theta_{K}}(x_i),
\end{align}
where $S_c$ denotes the subset of the support set with label $c$.
For a query sample $x_q$, classification is based on the distance between its embedding $f_{\theta_{K}}(x_i)$ and each class prototype $z^c_K$. ProtoNet is trained as follows: 
\begin{align}
    {\theta^*_{K}} = \argmin_{{\theta_{K}}} -\frac{\exp(\textrm{sim}(f_{\theta_{K}}(x), z^c_K))}{\sum_{c'} \exp(\textrm{sim}(f_{\theta_{K}}(x), z^{c'}_K))},
\end{align}
where $\textrm{sim}(\cdot, \cdot)$ represents the cosine similarity between two representations.
The training objective encourages intra-class compactness and inter-class separability. Embeddings of instances from the same class are pulled closer to their corresponding prototype, while embeddings from different classes are pushed farther apart. This promotes both the precision of prototype estimation and the discriminability of the learned representations.
}

\section{Knowledge-aided Semantic Communication}\label{sec:system-model}
\textcolor{duong}{We present a knowledge-aided SemCom framework, comprising} four modules, namely \textbf{1)} data-knowledge extractor, \textbf{2)} physical channel, \textbf{3)} SemCom system with invariant knowledge, and \textbf{4)} semantic knowledge-aided data decoder. {\color{black}The encoder and decoder utilize the robustness of conditional GAI by incorporating knowledge as a conditional variable, thereby enhancing the encoding and decoding processes.}
\subsection{System Model}\label{sec:problem-formulation}
Consider a network with $J$ users in which the users transmit data over a wireless network. Let $x\in\mathbb{R}^{d_x}$ denote the source data. Each data point $x$ has a corresponding label $y\in\mathbb{R}$. In our study, we assume that the $J$ users communicate with others using the SemCom framework (see Fig.~\ref{fig:UKIE-application}), which includes a physical channel for data transmission, and a semantic channel for semantic knowledge sharing among devices. 
% The detailed of the system model is demonstrated in Section~\ref{sec:system-model}.
\begin{figure*}[!ht]
\centering
\includegraphics[width = 0.9\linewidth]{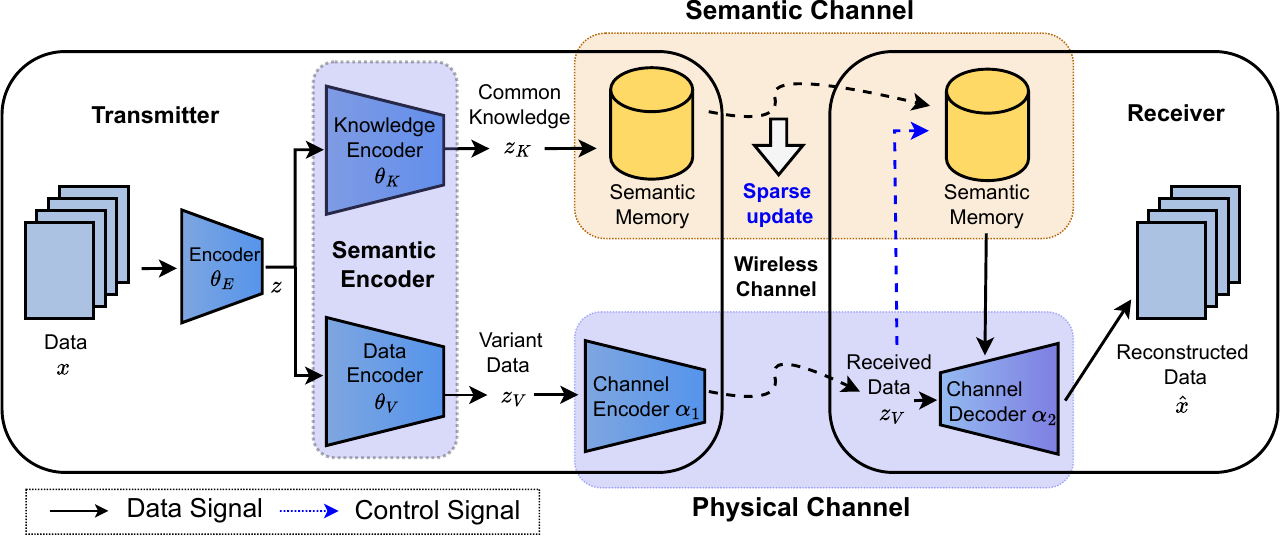}
\caption{Application of UKIE in SemCom.}
\label{fig:UKIE-application}
\end{figure*}
\subsection{Data-Knowledge Extractor}\label{sec:knowledge-extractor}
% To transmit source information $x$, the semantic transmitter first extracts its meaning. 
% The \textit{semantic encoder} $\theta$ plays a role similar to that of the source encoder in conventional communication systems. 
% Nevertheless, what sets the semantic encoder $\theta$ apart from the conventional source encoder is its unique ability to extract semantic knowledge from the data, supporting data reconstruction at the receiver's end.
Apart from the conventional source encoder, semantic encoder $\theta$ is capable of extracting semantic knowledge from the data, supporting data reconstruction at the receiver's end.
The parameters of the encoder are denoted as $\theta_1 = \{\theta_{E}, \theta_{K}, \theta_{V}\}$, where $\theta_{E}, \theta_{K}, \theta_{V}$ represent the representation, semantic knowledge, and variant data encoders, respectively. $\theta_{V}$ and $\theta_{K}$ are the two primary blocks for extracting knowledge and data, while the purpose of $\theta_{E}$ is to reduce redundant information (e.g., background) in data by applying the information bottleneck technique. The semantic encoder $\theta_1$ has two purposes. Firstly, it encodes the original data into semantic knowledge using the \emph{variant data encoder} $f_{\theta_{V}}: \mathbb{R}^{d_z}\rightarrow\mathbb{R}^{d_{z_V}}$:
\begin{align}
    z_V = f_{\theta_{V}}(z),\quad \textrm{s.t.} \quad z = f_{\theta_{E}}(x),
\end{align}
where $z_V \in \mathbb{R}^{d_{z_V}}$ are referred to as physical data and are transmitted over the wireless network. $z$ is the latent representation encoded by the encoder $\theta_{E}$. The representation encoder is characterized by $f_{\theta_{E}}: \mathbb{R}^{d_x}\rightarrow\mathbb{R}^{d_z}$.
Secondly, it extracts and constructs knowledge of the dataset via \emph{semantic knowledge encoder} $f_{\theta_{K}}: \mathbb{R}^{d_x}\rightarrow\mathbb{R}^{d_{z_K}}$. For instance, 
\begin{align}
    z_K = f_{\theta_{K}}(z),\quad \textrm{s.t.} \quad z = f_{\theta_{E}}(x),
\end{align}
where $z_K\in \mathbb{R}^{d_{z_K}}$ denotes the knowledge extracted by the \textit{semantic encoder}. The semantic knowledge $z_K$ is stored in the semantic memory of the distributed devices to facilitate the data reconstruction at the decoder. 
Based on the aforementioned purposes, the \textit{semantic encoder} has the following objectives in knowledge and data extraction:
\begin{objective}[Invariant knowledge]
The semantic knowledge $z_K$ always remains \emph{invariant with respect to the label}. 
\label{obj:inv-knowledge}
\end{objective}

\begin{objective}[Efficient compression]
The physical data $z_V$ has a smaller size than the original data $x$ because the semantic knowledge $z_K$ is excluded from the original data $x$.
\label{obj:efficient-compression}
\end{objective} 

\begin{objective}[Efficient knowledge extraction]
The physical data $z_V$ exhibits \emph{variance among samples in one class}. 
\label{obj:efficient-knowledge}
\end{objective}

\begin{objective}[Meaningful representation]
The semantic knowledge and data contain meaningful information to support the capability of reconstructing original data.     
\label{obj:meaningful-representation}
\end{objective}

Objective~\ref{obj:inv-knowledge} ensures that the semantic knowledge remains stable across distributed devices, whose data is sampled from distinguished domains with different data distributions \cite{2023-DG-FDG1, 2023-FL-FDG2}.
Furthermore, by finding invariant with respect to the category label, the label index can be leveraged for choosing the appropriate semantic knowledge for data reconstruction at the receiver's end without carrying a significant amount of knowledge via the physical channel (which is demonstrated in Section~\ref{sec:knowledge-aided-compression}).
Owing to this established consistency, the semantic channel, which is tasked with transmitting semantic knowledge, does not need frequent transmissions. As a consequence, Objectives~\ref{obj:inv-knowledge} and \ref{obj:efficient-compression} ensure that communication overheads of the SemCom system are significantly reduced.
Objective~\ref{obj:efficient-knowledge} is required due to:
\begin{lemma}
The data representations, in which invariant knowledge is excluded, have higher variance than those in the data representations that include invariant knowledge.
\label{lemma:data-variant-property}
\end{lemma}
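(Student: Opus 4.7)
The plan is to interpret the lemma through a law-of-total-variance decomposition that leans on the definitions implicit in Objectives~\ref{obj:inv-knowledge} and \ref{obj:efficient-knowledge}. First I would fix the comparison precisely: treat the representation \emph{containing} invariant knowledge as the full latent $z$, which factorises into $z_K = f_{\theta_{K}}(z)$ and $z_V = f_{\theta_{V}}(z)$, and the representation \emph{excluding} invariant knowledge as $z_V$ alone. The quantity of interest will be the (trace of the) covariance, evaluated either marginally over the data or conditionally on $z_K$, since $z_K$ is precisely what the decoder has on hand once invariant knowledge is treated as ``included''.

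Next I would invoke Objective~\ref{obj:inv-knowledge} to argue that $z_K$ is a (near-)deterministic function of the class label $y$, so that $\Var(z_K \mid y) \approx 0$ and $\Var(z_K) \approx \Var\bigl(\E[z_K \mid y]\bigr)$ is purely between-class. Combined with Objective~\ref{obj:efficient-knowledge}, which stipulates that $z_V$ retains the intra-class variation, the law of total variance gives
\begin{equation}
    \Var(z_V) = \E_y\bigl[\Var(z_V \mid y)\bigr] + \Var_y\bigl(\E[z_V \mid y]\bigr),
\end{equation}
whereas once $z_K$ (and hence $y$, up to the invariance approximation) is supplied as side information, the effective uncertainty in the representation collapses to the first term on the right-hand side. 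The standard conditioning-reduces-variance inequality then yields $\Var(z_V \mid z_K) \le \Var(z_V)$, with strict inequality as soon as distinct classes occupy distinct regions of variant space, which is exactly what the invariance training enforces.

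The final step would be to translate this variance reduction into the statement of the lemma: removing $z_K$ forces the representation to absorb both the within-class spread (required by Objective~\ref{obj:efficient-knowledge}) and the between-class spread that $z_K$ would otherwise account for, inflating its variance relative to the knowledge-aided representation whose spread is only within-class. In passing, this also explains why transmitting $z_V$ alongside a stored $z_K$ shrinks reconstruction uncertainty at the decoder, tying the lemma back to the knowledge-aided decoding in Section~\ref{sec:knowledge-aided-compression}.

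The hard part, I expect, will be the modelling step: pinning down a probabilistic setting in which $z_K$ and $z_V$, both deterministic functions of $z$, can meaningfully be compared. In particular, the clean identity $z_K = g(y)$ almost surely is an idealisation of Objective~\ref{obj:inv-knowledge} that the encoder only approximates, so I would either state the lemma modulo a perturbation term or argue in the population limit after invariance training has converged. A secondary subtlety is that variance is matrix-valued here; I would commit to the trace of the covariance throughout and appeal to componentwise monotonicity of conditioning to pass from the scalar Pythagorean identity above to the full multivariate claim.
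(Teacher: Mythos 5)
Your plan is coherent, but it heads down a genuinely different road from the paper, and it arrives at a different inequality. The paper's proof involves no probabilistic conditioning at all: it defines the ``variance'' of a representation as the \emph{per-dimension} empirical mean squared deviation from the invariant knowledge $z_K$ (used as the centering point in place of the sample mean, in the spirit of the prototypes of Section~\ref{sec:protonet}), writes $\var(z)=\frac{1}{Nd_z}\sum_i\Vert z^i-z^i_K\Vert^2$, splits the squared norm over the two blocks $z_V$ and $z_K$ of the concatenated latent, observes that the $z_K$ block contributes exactly zero, and concludes $\var(z)=\frac{d_{z_V}}{d_z}\var(z_V)\le\var(z_V)$. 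The whole lemma is thus a dimension-dilution argument: the invariant block adds nothing to the numerator but inflates the denominator $d_z$, so the full representation has lower per-coordinate spread than the variant block alone. Your law-of-total-variance route instead compares $\Var(z_V)$ with $\Var(z_V\mid z_K)$ --- the \emph{same} random vector with and without side information --- which is a true and arguably more statistically meaningful fact (and it does motivate the knowledge-aided decoding), but it is not the comparison the paper's proof makes, which is between the two \emph{different} vectors $z_V$ and $z$ taken marginally. Two further cautions on your version: the conditioning inequality holds only in expectation over $z_K$, i.e.\ $\E\bigl[\Var(z_V\mid z_K)\bigr]\le\Var(z_V)$, not pointwise as written; and the closing gloss that removing $z_K$ ``forces $z_V$ to absorb the between-class spread'' does not hold in this architecture, since $z_V=f_{\theta_V}(f_{\theta_E}(x))$ is a fixed function of the input and is unchanged by whether $z_K$ is retained. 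Your instinct that the hard part is pinning down the probabilistic model is exactly right --- the paper sidesteps the issue by working with an empirical, $z_K$-centered, dimension-normalized quantity, at the cost of some notational looseness (e.g.\ comparing $\Vert z^i_V-z^i_K\Vert$ across blocks of different dimensions).
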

\begin{proof}
    \begin{align}
        \var(z) = \frac{1}{Nd_z}\sum^{N}_{i=1}\Big\Vert z^{i} - z^{i}_K\Big\Vert^2,
    \end{align}
    where $N$ is the number of data, and $z_K$ is the invariant knowledge. As the UKIE extracts the data into two type of components $z_K$ and $z_V$, we have: 
    \begin{align}
        \var(z) 
        &= \frac{1}{Nd_z}\sum^{N}_{i=1} \Bigg[
          \Big\Vert z^{i}_V - z^{i}_K\Big\Vert^2 
        + \Big\Vert z^{i}_K - z^{i}_K\Big\Vert^2\Bigg]
        \\
        &= \frac{d_{z_V}}{Nd_z}\sum^{N}_{i=1} \Bigg[
          \frac{1}{d_{z_V}}\Big\Vert z^{i}_V - z^{i}_K\Big\Vert^2\Bigg]
         = \frac{d_{z_V}}{d_z}\var(z_V). \notag
    \end{align}
    Therefore, we have $\var(z_V)\geq \var(z)$.
\end{proof}
% \textit{Proof.} The proof is demonstrated in Appendix~\ref{app:lemma-1}

Objective~\ref{obj:meaningful-representation} ensures the aggregated information of invariant knowledge $z_K$ and data $z_V$ is more meaningful. Thus, the receiver can reconstruct the data effectively.
Given the extracted knowledge, the semantic knowledge is stored in the semantic storage. This storage plays a crucial role in both preserving semantic knowledge and overseeing updates through the semantic channel (which is demonstrated in Section~\ref{sec:semantic-channel}).
\subsection{Physical Channel}
In the physical channel, lossy compression is first applied to the transmitted data via the \emph{channel encoder} $f_{\alpha_1}:\mathbb{R}^{d_{z_V}}\rightarrow\mathbb{C}^{M\times1}$ with the parameter set $\alpha_1$. Therefore, we have the data compression process as follows:
\begin{align}
    s = f_{\alpha_1}(z_V).
\label{eq:encoder}
\end{align}
{\color{duong}For transmission over a wireless network with Rayleigh fading channels,} the received data $\hat{s}$ can be represented as 
\begin{align}
    \hat{s} = h\cdot s + n,
\end{align}
where $\hat{s} \in \mathbb{C}^{M\times 1}$ {\color{duong}consists of $M$ symbols, $h$ represents the channel state, which is assumed to be perfectly estimated using some pilot signals.} $n \sim \mathcal{CN}(0, \sigma^2_n \mathbf{I})$ indicates independent and identically distributed (i.i.d) Gaussian noise, $\sigma^2_n$ is the noise variance for each user's channel and $\mathbf{I}$ is the identity matrix. In addition to the primary data, the transmitted data $s$ also carries the label index $c$ of the corresponding source data $x$. Consequently, the decoder can select appropriate semantic knowledge from the semantic memory to facilitate data reconstruction (see Section~\ref{sec:knowledge-aided-compression}).
\subsection{Semantic Channel}\label{sec:semantic-channel}
As mentioned previously, semantic knowledge $\{z^c_K \vert c=1,\ldots, C\}$, where $C$ is the number of labels of the source data $x$, is stored in the semantic storage, which is installed in every distributed user. Whenever there are changes in semantic storage over a certain threshold, users will broadcast semantic knowledge to other distributed users. To this end, the semantic knowledge on user $j$ ($i,j\in[J], i\neq j$) receives sparse updates from user $i$ over the semantic channel as follows:
\begin{align}
z_{K}(j,t) =
\begin{cases}
z_{K}(j,t), \quad\text{if}~\Vert\Delta z_{K}(i,t)\Vert \leq \kappa, \\
z_{K}(j,t) + \Delta z_{K}(i,t) , \quad\text{otherwise},
\end{cases}
\label{eq:semantic-update}
\end{align}
where $\Delta z_{K}(i,t) = z_{K}(i,t) - z_{K}(i,t-\tau)$ represents the changes in the invariant representations on user $i$ over time step $t$ after $\tau$ seconds. We impose a threshold constraint $\kappa$ to regulate the sparse update frequency of the semantic channel. Because of the invariant characteristics of the semantic knowledge as mentioned in Section~\ref{sec:knowledge-extractor}, the semantic knowledge sustain low statistical noise as time evolves. Therefore, frequent updates are not required as shown in \eqref{eq:semantic-update}. Notably, choosing a smaller value for $\kappa$ allows data reconstruction to be performed more precisely when the distributed devices adapt to a new data distribution. Conversely, by selecting a larger value for $\kappa$, communication costs can be reduced at the expense of slower decoder adaptation owing to less frequent updates.
\subsection{Knowledge-aided Lossy Compression}\label{sec:knowledge-aided-compression}
In Section~\ref{sec:knowledge-extractor}, we assume that the knowledge remains invariant concerning the labels. To select suitable semantic knowledge for the semantic decoder, we employ the label index associated with the transmitted data. After data has been received, a label index is provided to semantic memory, allowing retrieval of relevant invariant representations to facilitate efficient data reconstruction at the semantic decoder.
Specifically, the appropriate semantic knowledge $z_k$ to the data $\hat{s}$ is chosen from semantic memory based on the label as $z_{K} = z^{c}_{K}$, where $c$ denotes the label index.
Given the received data $\hat{s}$ and semantic knowledge $z_K$, the data $\hat{x}\in \mathbb{R}^{d_x}$ is reconstructed as follows:
\begin{align}
    \hat{x} = g_{\theta_2}(\hat{z}_{{V}}, z_K) 
    =
    g_{\theta_2}(g_{\alpha_2}(\hat{s}), z_K),
\label{eq:decoder}
\end{align}
where $\alpha_2$, $\theta_2$ denote the parameters of the \textit{channel decoder} and \textit{semantic decoder}, respectively. We have $g_{\alpha_2}:\mathbb{C}^{M\times1}\rightarrow\mathbb{R}^{d_{z_V}}$ and $g_{\theta_2}:\mathbb{R}^{d_{z_V}+d_{z_K}}\rightarrow \mathbb{R}^{d_{x}}$. 
\begin{algorithm}[t]
\caption{Pseudo algorithm for training UKIE.}
\label{alg:UKIE}
\textbf{Input:} Initial model parameter $\theta_0$, learning coefficients $\alpha_\textrm{rec},\alpha_\textrm{v},\alpha_\textrm{iv},\alpha_\textrm{gtc},\alpha_\textrm{adv} \in [0,1)$ and learning rate $\eta_\textrm{UKIE},\eta_\textrm{MID},\eta_\textrm{adv} \in\mathbb{R}^{+}$.\\
\For{training round $r = 0,1,\dots,I $}{
    % \emph{UKIE Generator} \\
    /*\textit{ UKIE Generator }*/ \\
    \For{iteration $i = 0,1,\dots,I_\textrm{UKIE}-1 $}{
        Compute loss for the UKIE model:
        \begin{align}
            \mathcal{L}_{\textrm{UKIE}} = \alpha_\textrm{rec}\mathcal{L}_{\textrm{rec}} + \alpha_\textrm{iv}\mathcal{L}_{\textrm{iv}} + 
            \alpha_\textrm{v}\mathcal{L}_{\textrm{v}}. \notag
        \end{align} \\ 
        Apply gradient descent according to~\eqref{eq:UKIE-update}.
        % \begin{align}
        %     &\theta^{r,i+1}_{E} = \theta^{r,i}_{E} - \eta_\textrm{UKIE}\nabla_{\theta^{r,i}_{E}} \mathcal{L}_{\textrm{UKIE}} \notag \\
        %     &\theta^{r,i+1}_{V} = \theta^{r,i}_{V} - \eta_\textrm{UKIE}\nabla_{\theta^{r,i}_{V}} \mathcal{L}_{\textrm{UKIE}} \notag \\
        %     &\theta^{r,i+1}_{K} = \theta^{r,i}_{K} - \eta_\textrm{UKIE}\nabla_{\theta^{r,i}_{K}} \mathcal{L}_{\textrm{UKIE}} \notag \\
        %     &\theta^{r,i+1}_{2} = \theta^{r,i}_{2} - \eta_\textrm{UKIE}\nabla_{\theta^{r,i}_{2}} \mathcal{L}_{\textrm{UKIE}}. \notag
        % \end{align}
    }   
    % \emph{Meaningful Invariant Discriminator} \\
    /*\textit{ Meaningful Invariant Discriminator }*/ \\
    \For{iteration $ i = I_\textrm{UKIE},\dots,I_\textrm{UKIE}+I_\textrm{MID} $}{
        Compute loss for meaningful invariant representation:
        \begin{align}
            \mathcal{L}_{\textrm{MID}} =  \alpha_\textrm{iv}\mathcal{L}_{\textrm{iv}} + 
            \alpha_\textrm{gtc}\mathcal{L}_{\textrm{gtc}}. \notag
        \end{align} \\ 
        Compute adversarial loss $\mathcal{L}_\textrm{adv}$. \\
        Apply gradient descent according to~\eqref{eq:MID-update}, \eqref{eq:ADV-update}.
        % \begin{align}
        %     &\theta^{r,i+1}_{K} = \theta^{r,i}_{K} - \eta_\textrm{MID}\nabla_{\theta^{r,i}_{K}} \mathcal{L}_{\textrm{MID}} \notag \\
        %     &\theta^{r,i+1}_\mathrm{gtc} = \theta^{r,i}_\mathrm{gtc} - \eta_\textrm{MID}\nabla_{\theta^{r,i}_\mathrm{gtc}} \mathcal{L}_{\textrm{MID}}. \notag 
        % \end{align}        
    }   
}
\end{algorithm}
% \begin{algorithm}[t]
% \caption{Pseudo algorithm for gradient conflict mitigated meta learning based UKIE.}
% \label{alg:MetaUKIE}
% \textbf{Input:} Initial model parameter $\theta_0$, constant $c\in[0,1)$ and learning rate $\alpha\in\mathbb{R}^{+}$.\\
% \For{meta iteration $ i = 0,1,\dots,I $}{
% 	\For{each local task $t \in \mathcal{T}$}{
%             Compute local task gradient via \eqref{eq:UKIE-MeL-Task}.\\
%             Update local task via gradient propagation.\\ 
% 	}
%     Compute joint task gradient via \eqref{eq:UKIE-MeL-main} as:
%     \begin{align}
%         h_0 = \frac{1}{C}\sum_{c\in \mathcal{C}} \mathcal{L}^c_{\textrm{tot}}(\theta_c), \textrm{ and } {\theta_{K}} = c^2 \Vert h_0\Vert^2.
%     \end{align}\\ 
%     Solve: 
%     \begin{subequations}
% 	\begin{alignat} {3}
% 		& \min_{w}
% 		& ~~ &F(w) \triangleq g^{\top}_w g_0 + \sqrt{{\theta_{K}}}\Vert g_w\Vert,
% 	      \\
% 		& \text{s.t.}
% 		&	& g_w = \frac{1}{C}\sum_{c\in \mathcal{C}} w_c \cdot \mathcal{L}^c_{\textrm{tot}}(\theta_c).
%     \end{alignat}
% \end{subequations}
%     Update meta task via gradient propagation as:
%     \begin{align}
%         \theta = \theta - \alpha\Big(g_0 + \frac{\sqrt{{\theta_{K}}}}{\Vert g_w\Vert}\cdot g_w\Big).
%     \end{align}
% }
% \end{algorithm}
In the next section, we present the architecture and learning process of our joint knowledge encoder - knowledge-aided data reconstructor, which generates knowledge that is invariant with respect to the data label. 
\section{Knowledge Encoder for Semantic Communications}\label{sec:ukie-training}
% The primary goal of invariant representations in SemCom is to make knowledge domain-independent, ensuring that the information it contains is both meaningful and concise while also resisting information loss and the anti-causality issues arising from the extraction process of invariance and variance. 
In this section, we develop UKIE, a GAN-based model that identifies the optimal invariant and variant representations, and guarantees that the representations have meaningful information for data reconstruction. UKIE separates information into two distinct components, i.e., invariant representations $z_K$ and variant representations $z_V$. Irrelevant information is then filtered out from these representations. UKIE operates as an \emph{invariant encoder} that shares the same goal as the \emph{knowledge encoder} $\theta_K$. To effectively train UKIE, the following criteria must be met: 1) two representations $z_K, z_V$ are invariant and variant, respectively, 2) both $z_K, z_V$ comprehensively retain all the relevant information from the original data. The overall framework is shown in Fig.~\ref{fig:GAN-UKIE}. 

To meet these criteria, we first design the loss components and joint optimization problem that support the training of UKIE (demonstrated in Section~\ref{sec:variance-invariance-ke}). For efficient UKIE learning, we disentangle the joint optimization problem into two interleaved learning processes (demonstrated in Section~\ref{sec:two-stage-optimization}).
\begin{figure}[!ht]
\centering
\includegraphics[width = 0.85\linewidth]{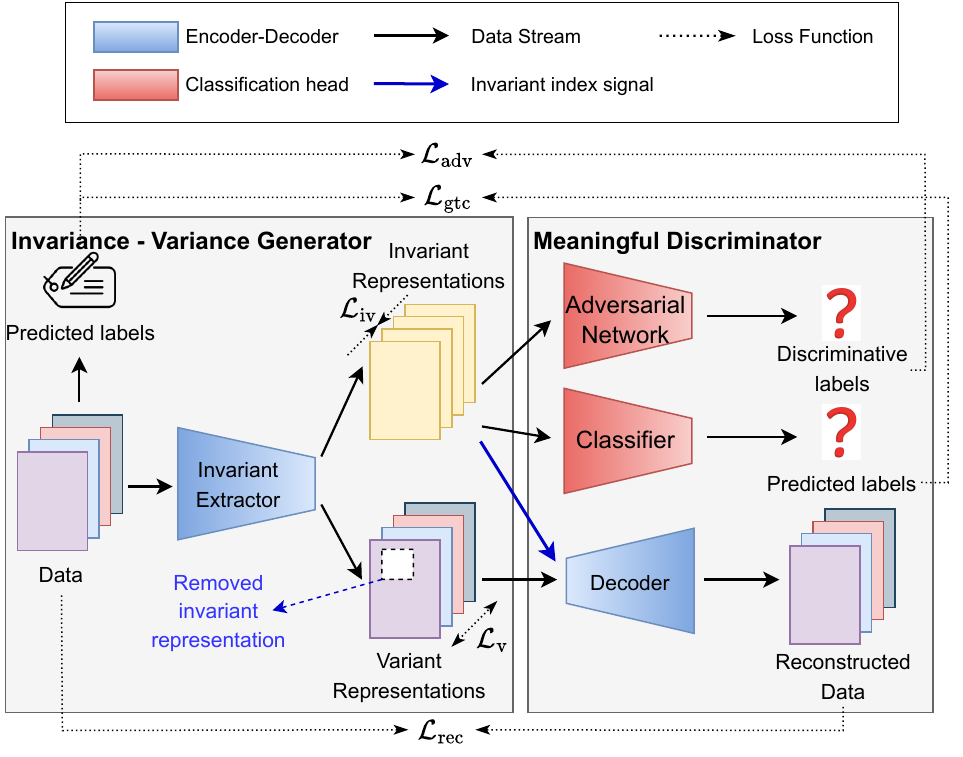}
\caption{The framework of UKIE. The main system block of UKIE is the \emph{invariant encoder} $\theta_1$, which can be considered the \emph{invariance-varariance generator}. The invariant encoder has the objective of learning the two distinguished representations, i.e., invariant and variant representations. To train UKIE, we use the \emph{Meaningful Discriminator} $\{\theta_{\textrm{gtc}}, \theta_{\textrm{rec}}\}$ to anchor the extracted representations to retain the meaningful information from the original data. The adversarial network $\theta_{\textrm{adv}}$ is used to find the anomalies in the invariant representations so that the invariant extractor can improve itself. Five loss functions are used in the framework. $\mathcal{L}_{\textrm{iv}}, \mathcal{L}_{\textrm{v}}$ make the two extracted representations $z_K, z_V$ invariant and variant, respectively. The three discriminator losses $\mathcal{L}_{\textrm{rec}}, \mathcal{L}_{\textrm{gtc}}, \mathcal{L}_{\textrm{adv}}$ are losses in the \emph{Meaningful Discriminator}.}
\label{fig:GAN-UKIE}
\end{figure}
\subsection{Variance-Invariance as a Knowledge Encoder}\label{sec:variance-invariance-ke}
\subsubsection{Invariant Representations Learning}
To guarantee that the extracted features remain \emph{invariant} to their corresponding label (Objective~\ref{obj:inv-knowledge}), it is imperative to obtain encoded representations with minimal empirical distance to other representations within the same class. When training an \emph{invariant} knowledge encoder, we employ the training process as mentioned in Section~\ref{sec:protonet}. Therefore, we have the following:
% \begin{align}
%     \mathcal{L}_{\textrm{iv}} = \sum^C_{c=1}\mathcal{L}^c_{\textrm{iv}} = \sum^C_{c=1}\mathbb{E}_{i,j \in  B_c \vert i\neq j} \left[\Big\Vert z^{(i)}_{{K}} - z^{(j)}_{{K}} \Big\Vert^2\right],
%     \label{eq:UKIE-invariant}
% \end{align}
\begin{align}
    \mathcal{L}_{\textrm{iv}} = -\sum^C_{c=1} \frac{\E_{x\in B_c}[\exp(\textrm{sim}(f_{\theta_{K}}(x), z^c_K)])}{\sum_{c'} \E_{x\in B_{c'}}[\exp(\textrm{sim}(f_{\theta_{K}}(x), z^{c'}_K))]},
    \label{eq:UKIE-invariant}
\end{align}
where $B_c, B_{c'}$ represent the data in batch $B$ that belongs to the label $c, c'$, respectively. The loss is computed across $C$ label instances in the dataset. We abuse the notation $z^{(i)}_K$ to represent the encoded representations from the original data with the index $i$.
\subsubsection{Variant Representations Learning}\label{sec:variant-learning}
UKIE also extracts variant representations for data transmission via the physical channel (i.e., Objective~\ref{obj:efficient-knowledge}). To extract variant representations effectively, we maximize the empirical distance among representations within a specific label. However, two problems may arise during the maximization of the MSE between samples:
\begin{enumerate}
    \item When the MSE value is at its initial value (at which the variance is relatively low, i.e., around $0$), the optimization process prioritizes tasks with high initial losses. This prioritization can overshadow the learning of variant representations in the early stages.
    \item When the MSE value is already optimized, performing the maximization process further can increase the variance, potentially diminishing the learning of other tasks.
\end{enumerate}
To mitigate these problems, we adopt the hinge loss \cite{2022-IL-VICReg} in designing the variant loss $\mathcal{L}_{\textrm{v}}$:
\begin{align}
    \mathcal{L}_{\textrm{v}} = \frac{1}{C}\sum^C_{c=1}\mathcal{L}^c_{\textrm{v}} = \frac{1}{C}\sum^C_{c=1} \max\Big(0, 1 - \sqrt{\var(z^{ c}_{{V}})+\epsilon}\Big),
    \label{eq:UKIE-variant}
\end{align}
where $\var(z^{c}_{{V}})$ represents the variance among samples in the same class $c$, and
\begin{align}
    \var(z^{ c}_{{V}}) = \mathbb{E}_{i \in  B_c} \left[\Big\Vert z^{(i)}_{{V}} - \mathbb{E}_{j \in  B_c}\Big(z^{(j)}_{{V}}\Big)\Big\Vert^2\right].
\end{align}
Incorporating the hinge loss brings the variant learning function into better alignment with other loss functions used in UKIE. This alignment involves initiating the optimization from higher values to mitigate trivial losses during the initial phase, while setting a lower boundary at zero to prevent biased learning towards the variant loss $\mathcal{L}_{\textrm{v}}$ as the learning progresses. 

\subsubsection{Meaningful Representations}
When decomposing the data into two distinct components (i.e., variants $z^{(i)}_{V}$ and invariants $z^{(i)}_{K}$), it is essential that the disentangled representations retain sufficient meaningful information to facilitate accurate data reconstruction, as dictated by the SCM. Hence, our goal for training UKIE is two-fold: 
\begin{itemize}
    \item The invariant representations capture crucial information that accurately represents their respective label.
    \item The two extracted components can be combined and collectively used to reconstruct the original data with the highest possible accuracy.
\end{itemize}
To achieve the first goal, we follow the SCM \cite{2016-Causal-Primer}. Specifically,
% given the invariant characteristics with respect to one specific label,
% the invariant representations should be able to predict that label. 
it is possible to predict a specific label from its invariant representations.
% To achieve this, we introduce the label classification. More precisely, the label classification function $\mathcal{L}_{\textrm{gtc}}$ is represented as 
To achieve this, we introduce the label classification function $\mathcal{L}_{\textrm{gtc}}$ as follows:
% \begin{align}
%     \mathcal{L}_{\textrm{gtc}} = \mathbb{E}_{i\in  B} \left[P(y^{i}) \log\frac{1}{P\Big(f_{\xi}(z^{(i)}_{K})\Big)}\right],
%     \label{eq:UKIE-invariant-meaning}
% \end{align}
\begin{align}
    \mathcal{L}_{\textrm{gtc}} = -\mathbb{E}_{i\in B} \left[P(y^{(i)}) \log P\Big(f_{\xi}(z^{(i)}_{K})\Big)\right],
    \label{eq:UKIE-invariant-meaning}
\end{align}
where $f_{\xi}:\mathbb{R}^{d_{z_K}}\rightarrow\mathbb{R}$ is the classifier network, parameterized by $\xi$, handling the classification task for the invariant representations $z^{(i)}_{K}$. $y^{(i)}$ is the label of data instance $i$. $P(\cdot)$ denotes the probability distributions.

However, UKIE cannot extract causality invariant representations if it is solely optimized by the classification loss. The reason is that crucial information may be memorized in the classifier $\xi$, which causes overfitting when operating in new users \cite{2020-DG-EntropyReg}. This problem can be alleviated by exploiting the adversarial learning \cite{2018-DG-CIAN}. Specifically, we introduce an adversarial discriminator $\Psi$, and train the classifier and discriminator as a min-max fairness game as follows: 
\begin{align}
    \min_{\theta_K, \theta_E}&\max_\Psi \mathcal{L}_\textrm{adv} 
    = \underset{i\in  B}{\mathbb{E}} \left[P(y^{(i)}) \log\frac{1}{P\Big[f_{\Psi}(z^{(i)}_{K})\Big]}\right], \label{eq:adversarial}\\
    &~\textrm{s.t.} ~~~z^{(i)}_{K} =f_{\theta_{K}}(f_{\theta_{E}}(x^{(i)})), \notag
\end{align}
where $f_{\Psi}:\mathbb{R}^{d_{z_K}}\rightarrow\mathbb{R}$. The purpose of~\eqref{eq:adversarial} is 
% to illustrate that the representation and invariant encoder aim 
to stimulate knowledge generated by UKIE to deceive the adversarial discriminator $\Psi$, whereas the discriminator $\Psi$ endeavors to identify any false information in the invariant knowledge $z_K$.

To achieve the second goal, we employ the reconstruction loss $\mathcal{L}_{\textrm{rec}}$. Specifically, we minimize the MSE between the original data and its reconstructed version. To counteract the risk of overfitting, where the Deep Neural Network (DNN) might encode essential information solely in its model parameters and hinder in generalization across different domains, we employ the Variation AutoEncoder (VAE) loss \cite{2013-DL-VAE}. More precisely, the loss components are as follows:
\begin{align}
		\mathcal{L}_{\textrm{rec}}
		= \sum^C_{c=1}\Big[\mathcal{L}^c_{\textrm{MSE}} - \mathcal{L}^c_{\textrm{KL}}\Big].
    \label{eq:UKIE-Rec}
\end{align}
Here, $\mathcal{L}_{\textrm{rec}}$ computes the joint loss MSE and KL overall label $c\in C$. The label-wise loss $\mathcal{L}^c_{\textrm{MSE}}$ and $\mathcal{L}^c_{\textrm{KL}}$ are defined as:
% \begin{align}
%     &\mathcal{L}_{\textrm{MSE}} = \Expect_{x\sim X^c} \Big[\Big(x - g_{\theta_2}(f_{\theta_1}(x))\Big)^2\Big],
%     \\
%     &\mathcal{L}_{\textrm{KL}} = \left[P(z\vert x) \log\frac{P(z\vert x)}{ P(z\vert \hat{x})\Big)}\right].
% \end{align}
\begin{align}
    \mathcal{L}^c_{\textrm{MSE}} &= \Expect_{x\sim X^c} \Big[\Big(x - g_{\theta_2}(f_{\theta_1}(x))\Big)^2\Big],
    \\
    \mathcal{L}^c_{\textrm{KL}} &= \Expect_{x\sim X^c}\Big[P(z\vert x) \log [P(z\vert x) / P(z\vert \hat{x})]\Big)\Big].
\end{align}
Here, $\mathcal{L}^c_{\textrm{MSE}}$ denotes the MSE loss between the original data $x$ and its reconstructed version obtained through the inference of the encoder $f_{\theta_1}$ and decoder $g_{\theta_2}$. Minimizing the $\mathcal{L}^c_{\textrm{MSE}}$ term ensures that the reconstructed data closely resembles the original data. The $\mathcal{L}^c_{\textrm{KL}}$ term represents the KL divergence loss, quantifying the statistical distance between two conditional probabilities: the inference from the original data to the latent representations $P(z\vert x)$ and that from the latent representations to the reconstructed data $P(z\vert \hat{x})$. By maximizing the KL divergence $\mathcal{L}^c_{\textrm{KL}}$, we enhance decoder generalization by compelling the encoder to output a distribution over the latent space instead of a single point. By combining these two terms, we can have both the invariant and variant representations \emph{capturing meaningful information} from the source data.

\subsubsection{The impact of SCM to the meaningful representations of UKIE}\label{sec:impact}
% Besides improving the meaningful information on invariant representations, the SCM also improves the meaningful information of variant representations. 
To evaluate the robustness of using SCM for knowledge extraction, particularly in extracting meaningful invariant and variant representations for data reconstruction, we propose the following lemma:
\begin{lemma}
Given the disentangled representations $z_K$, and $z_V$ from the original representations $z$, the meaningful information (characterized by entropy $H(\cdot)$) on the aggregated representation $z_K\cup z_V$, brings the most crucial information to the data reconstruction task when the two representations are independent, i.e., $I(z_K; z_V)=0$. 
\label{lemma:meaningful-representations}
\end{lemma}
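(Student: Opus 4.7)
The plan is to interpret the \emph{meaningful information on the aggregated representation} $z_K \cup z_V$ as the joint Shannon entropy $H(z_K, z_V)$, and to reduce the claim to showing that, under the capacity constraints implicitly imposed on the two encoders, this joint entropy is maximized precisely when $I(z_K; z_V) = 0$. I would state up front that the marginal entropies $H(z_K)$ and $H(z_V)$ are fixed by the architectural and role constraints from Section~\ref{sec:knowledge-extractor} (in particular Objective~\ref{obj:inv-knowledge} for $z_K$ and Objective~\ref{obj:efficient-knowledge} for $z_V$); without fixing these, the maximization is vacuous.

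First, I would invoke the standard identity
\begin{align}
H(z_K, z_V) = H(z_K) + H(z_V) - I(z_K; z_V).
\end{align}
Since $I(z_K; z_V) \geq 0$, with equality iff $z_K$ and $z_V$ are statistically independent, it follows immediately that $H(z_K, z_V)$ attains its maximum $H(z_K) + H(z_V)$ precisely when $I(z_K; z_V) = 0$. In other words, any statistical overlap between the two components strictly reduces the non-redundant content carried by their concatenation.

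Next, to connect joint entropy to the reconstruction task, I would exploit the Markov chain $x \to z \to (z_K, z_V) \to \hat{x}$ induced by the encoder--decoder pipeline. Because $z_K = f_{\theta_{K}}(z)$ and $z_V = f_{\theta_{V}}(z)$ are deterministic maps of $z$, we have $H(z_K, z_V \mid z) = 0$ and hence $I(z;(z_K, z_V)) = H(z_K, z_V)$. The data processing inequality then yields $I(x; \hat{x}) \leq I(x;(z_K, z_V)) \leq I(z;(z_K, z_V))$, so maximizing $H(z_K, z_V)$ tightens the best achievable information budget for reconstruction; via Fano's inequality this translates into a lower attainable error between $\hat{x}$ and $x$.

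The main obstacle I anticipate is formalizing the implicit constraint that the two marginals $H(z_K)$ and $H(z_V)$ are held fixed while $I(z_K; z_V)$ is varied; without such a constraint, the optimum is trivially achieved by inflating either representation, which would make the lemma uninformative. A related subtlety is the stochastic-encoder case, in which $H(z_K, z_V \mid z) > 0$ and the equality $I(z;(z_K,z_V)) = H(z_K, z_V)$ weakens to an inequality. I would argue that even then, independence yields the tightest achievable upper bound on the information available to $g_{\theta_2}$, and hence the strongest reconstruction guarantee consistent with Objective~\ref{obj:meaningful-representation}.
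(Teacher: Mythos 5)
Your proposal is correct and rests on exactly the same core step as the paper's own proof: the identity $H(z_K, z_V) = H(z_K) + H(z_V) - I(z_K; z_V)$ together with the nonnegativity of mutual information, so that the aggregated entropy is maximized precisely at independence. The additional material you supply (fixing the marginals, the Markov chain and data-processing argument linking joint entropy to reconstruction quality) goes beyond what the paper states but only strengthens the same argument rather than replacing it.
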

\begin{proof}
    The aggregated representation at the receiver's end can be considered as $z_K\cup z_V$. Thus, we have the following information measurement on the aggregated representation: 
    \begin{align}
        H(z_K\cup z_V) 
        &= H(z_K) + H(z_V) - I(z_K;z_V) \notag\\
        &\overset{(a)}{\leq} H(z_K) + H(z_V).
    \end{align}
    Here, the equality $(a)$ holds when $I(z_K;z_V)=0$, which means the two representations are independent.
\end{proof}
% \textit{Proof.} The proof is demonstrated in Appendix~\ref{app:lemma-2}

According to \cite{2022-IL-DIR}, the two disentangled representations by the SCM approach are almost independent of each other. As a result, the decoder at the receiver's end can access the most essential information for data reconstruction, thereby enhancing the robustness of UKIE in data reconstruction.

\subsection{Two-stage Optimization of UKIE} \label{sec:two-stage-optimization}
As mentioned in Section~\ref{sec:variance-invariance-ke}, the joint loss function is complex which makes the optimization problem NP-hard. Acknowledging this shortcoming, we decompose the UKIE training process into two steps, denoted as the \emph{UKIE Generator} and the \emph{Meaningful Invariant Discriminator} (MID). The \emph{UKIE Generator} is responsible for training the encoder to achieve good data reconstruction while maintaining the invariant and variant characteristics of $z_K$ and $z_V$, respectively. The \emph{MID} focuses on distilling essential knowledge into the invariant representation $z_K$. By splitting the complicated tasks into two simpler tasks with fewer loss components, the individual tasks become significantly more straightforward, thereby enhancing the overall training process. To train these two disentangled tasks jointly and efficiently, we use a cost-effective meta-learning technique called Reptile \cite{2018-MeL-Reptile}. Reptile is proven to be computationally efficient while maintaining performance comparable to conventional meta-learning approaches \cite{2017-Mel-MAML}.

In our Reptile-based UKIE, the \emph{UKIE Generator} and \emph{MID} are trained alternately in a continuous manner. At the start of each training round $r$, we begin with the \emph{UKIE Generator} to prioritize the training of meaningful representations that can be faithfully reconstructed. Specifically, the UKIE model is updated within the \emph{UKIE Generator} as follows:
\begin{align}
    &\theta^{r,i+1}_{E} = \theta^{r,i}_{E} - \eta_\textrm{UKIE}\nabla_{\theta^{r,i}_{E}} \mathcal{L}_{\textrm{UKIE}}, \notag \\
    &\theta^{r,i+1}_{V} = \theta^{r,i}_{V} - \eta_\textrm{UKIE}\nabla_{\theta^{r,i}_{V}} \mathcal{L}_{\textrm{UKIE}}, \notag \\
    &\theta^{r,i+1}_{K} = \theta^{r,i}_{K} - \eta_\textrm{UKIE}\nabla_{\theta^{r,i}_{K}} \mathcal{L}_{\textrm{UKIE}}, \notag \\
    &\theta^{r,i+1}_2 = \theta^{r,i}_2 - \eta_\textrm{UKIE}\nabla_{\theta^{r,i}_2} \mathcal{L}_{\textrm{UKIE}}. 
\label{eq:UKIE-update}
\end{align}
In the subsequent \emph{MID}, we focus on the training of encoders $\theta_E, \theta_K$ and finetune the label classifier $\theta_\textrm{gtc}$ as follows: 
\begin{align}
    &\theta^{r,i+1}_{E} = \theta^{r,i}_{E} - \eta_\textrm{MID}\nabla_{\theta^{r,i}_{V}} \mathcal{L}^{r,i}_{\textrm{MID}}. \notag \\
    &\xi^{r,i+1} = \xi^{r,i} - \eta_\textrm{MID}\nabla_{\theta^{r,i}_\mathrm{gtc}} \mathcal{L}^{r,i}_{\textrm{MID}},
\label{eq:MID-update}
\end{align}   
and the adversarial update is applied as follows: 
\begin{align}
    &\Psi^{r,i+1} = \Psi^{r,i} + \eta_\textrm{adv}\nabla_{\Psi^{r,i}} \mathcal{L}^{r,i}_{\textrm{adv}}, \notag \\ 
    &\theta^{r,i+1}_E = \theta^{r,i}_E - \eta_\textrm{adv}\nabla_{\theta^{r,i}_E} \mathcal{L}^{r,i}_{\textrm{adv}}, \notag \\
    &\theta^{r,i+1}_K = \theta^{r,i}_K - \eta_\textrm{adv}\nabla_{\theta^{r,i}_K} \mathcal{L}^{r,i}_{\textrm{adv}}.  
\label{eq:ADV-update}
\end{align}
The details of the training process are shown in Alg.~\ref{alg:UKIE}.

\section{Experimental Evaluation}
\label{sec:experimental-eval}
\subsection{Experimental Setting}\label{sec:sec-sett}
\label{sec:experimental-setting}
Our implementation is based on PyTorch. The experiments are conducted on Ubuntu 22.04.4 LTS with a 13th Gen Intel(R) Core(TM) i9-13900HX processor, 32GB RAM and a GeForce RTX 4070 GPU. \emph{The reproducible repository will be released upon acceptance.}
\subsubsection{Dataset and metrics}
We conduct experiments with four popular datasets, comprising MNIST \cite{2010-Data-MNIST}, EMNIST \cite{2017-Data-EMNIST}, CIFAR-10 \cite{2009-Data-CIFAR}, and CINIC-10 \cite{2018-Data-CINIC10}. To evaluate the efficiency of invariant knowledge abstraction, we use the variance to measure the homogeneity of semantic knowledge, and the variant representations for the data transmission. We evaluate physical data reconstruction using mean squared error (MSE) and PSNR to measure the fidelity of reconstructed images relative to the originals.
\subsubsection{Model settings}
We employ ResNet-9 \cite{2016-DL-Resnet} for both the \textit{semantic encoder} and \textit{semantic decoder}. For the invariant and variant representation extractors, we use a shallow CNN with 3 layers. Additionally, we utilize a shallow fully-connected DNN with 3 layers to serve as the naive classifier and adversarial discriminator for the invariant representations. In all our experiment evaluations, we use a batch size of 128.
% \subsubsection{Compression settings}
% In our experiments, we used an autoencoder \cite{2013-DL-VAE} to simulate a lossy compression system. The dimension of the latent is configurable to simulate the compression ratio of the system. To be more specific, the compression ratio can be approximated as $o_u = {d_z}/{d_x}$,
% where $d_z$ is the latent dimension of the autoencoder, and $d_x$ represents the dimensionality of the input data.
\subsubsection{Baselines}
{\color{duong}In our experiments, UKIE is compared with other methods in SemCom, including DeepSC \cite{2021-SEM-DeepSC} and DJSCC-N \cite{2020-SemCom-DJSCCF}, SemCC \cite{2024-SemCom-SemCC}, and DeepMA \cite{2024-SemCom-DeepMA}. 
We further compare the performance of the proposed approaches with conventional digital communication techniques that utilize separate source and channel coding under an equivalent compression ratio. For source coding, we employ the Better Portable Graphics (BPG) image compression algorithm, which is derived from the intra-frame encoding methodology of the High-Efficiency Video Coding (H.265) standard. For channel coding, we adopt LDPC codes configured according to the IEEE 802.16E standard, with a block length of 2304 and coding rates of $1/2$, $1/3$, and $3/4$ in our simulations. QAM with modulation orders of $4$, $16$, and $64$ is utilized. For clarity, we report results corresponding only to the optimal combination of LDPC rates and modulation schemes.
Additionally, we evaluate the upper-bound performance of the digital communication approach, referred to as BPG+Capacity. This upper bound reflects capacity-achieving transmission, as defined by Shannon's theorem, for a given transmit signal-to-noise ratio (SNR) under the assumption of error-free transmission. Consequently, no practical digital transmission scheme that incorporates channel coding and modulation can surpass this theoretical upper bound.}

\begin{figure*}[!ht]
     \centering
     \begin{subfigure}[b]{0.24\textwidth}
         \centering
         \includegraphics[width=\textwidth]{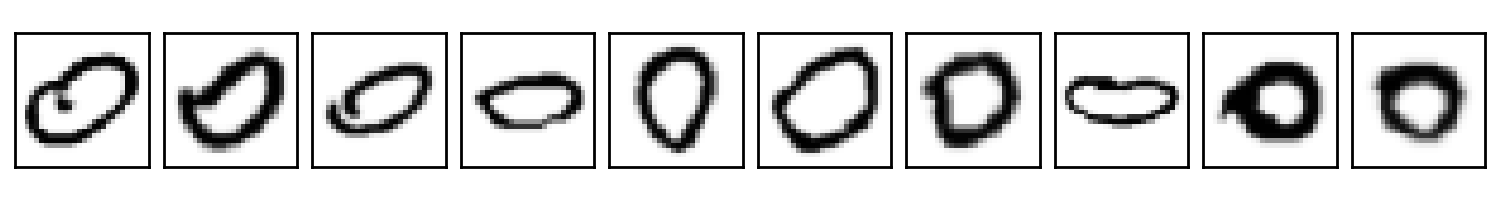}
     \end{subfigure}
     \begin{subfigure}[b]{0.24\textwidth}
         \centering
         \includegraphics[width=\textwidth]{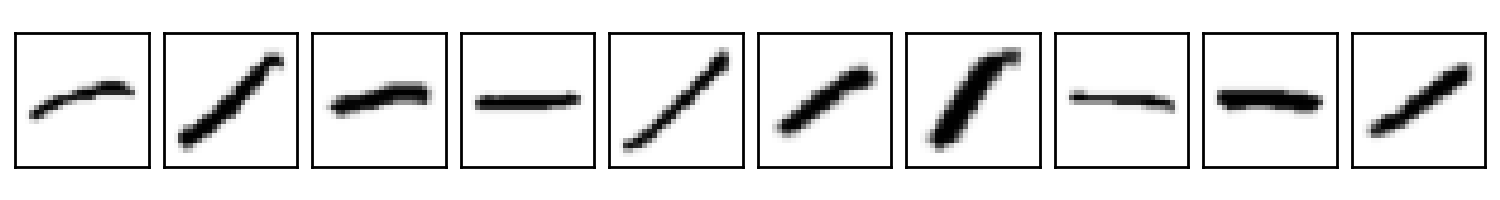}
     \end{subfigure}
     \begin{subfigure}[b]{0.24\textwidth}
         \centering
         \includegraphics[width=\textwidth]{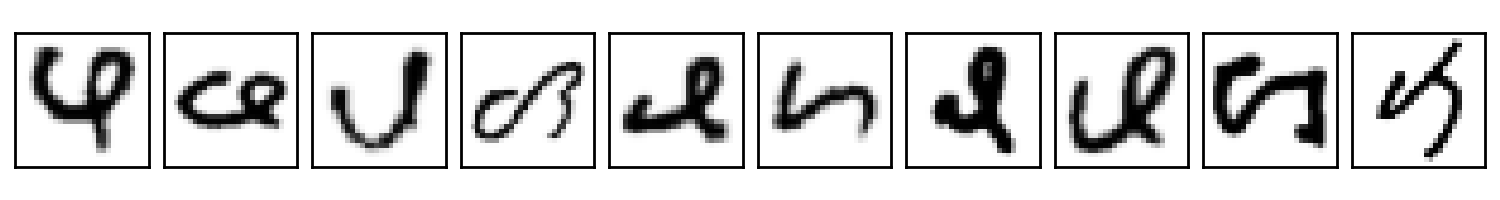}
     \end{subfigure}
     \begin{subfigure}[b]{0.24\textwidth}
         \centering
         \includegraphics[width=\textwidth]{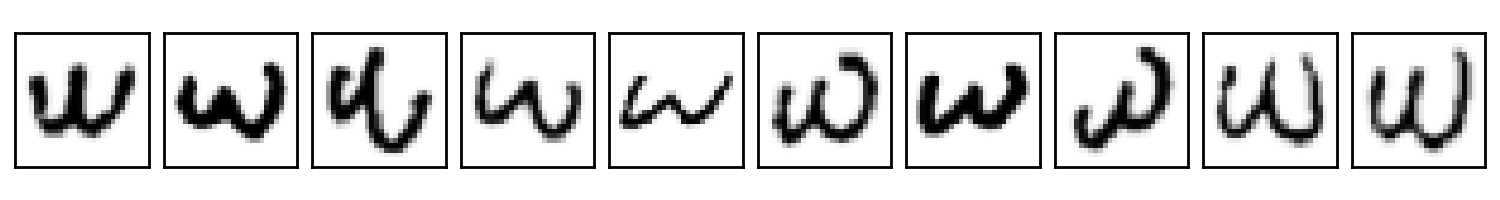}
     \end{subfigure} 
     \\
     \begin{subfigure}[b]{0.24\textwidth}
         \centering
         \includegraphics[width=\textwidth]{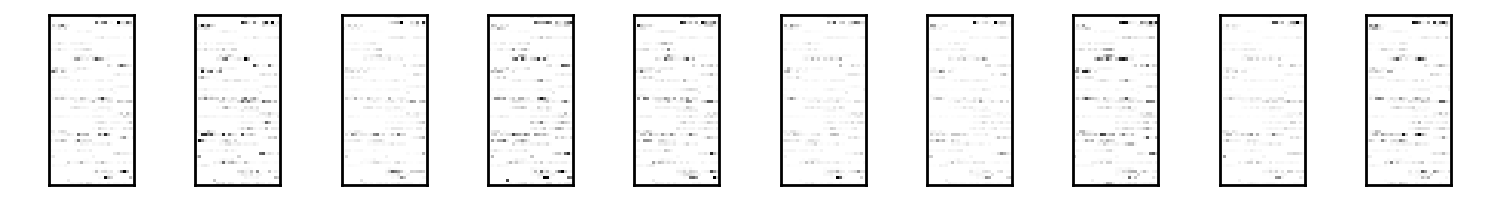}
     \end{subfigure}
     \begin{subfigure}[b]{0.24\textwidth}
         \centering
         \includegraphics[width=\textwidth]{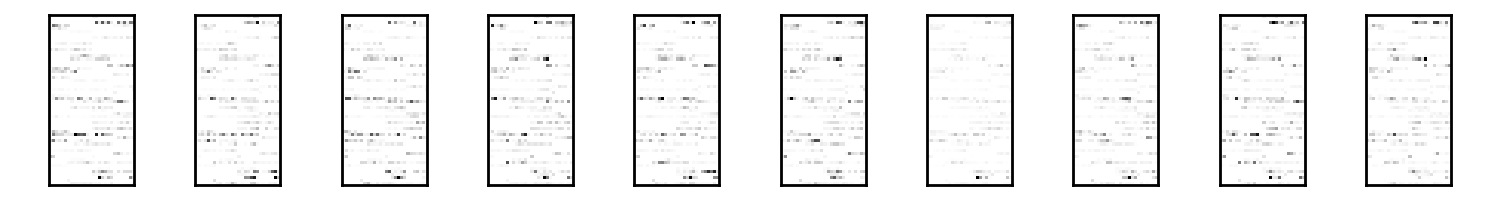}
     \end{subfigure}
     \begin{subfigure}[b]{0.24\textwidth}
         \centering
         \includegraphics[width=\textwidth]{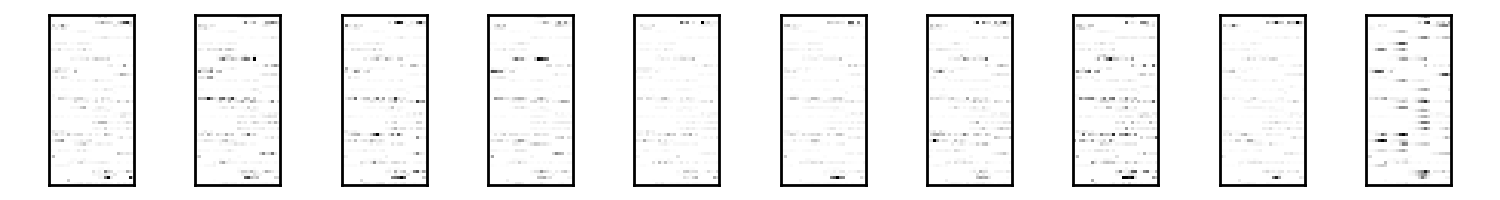}
     \end{subfigure}
     \begin{subfigure}[b]{0.24\textwidth}
         \centering
         \includegraphics[width=\textwidth]{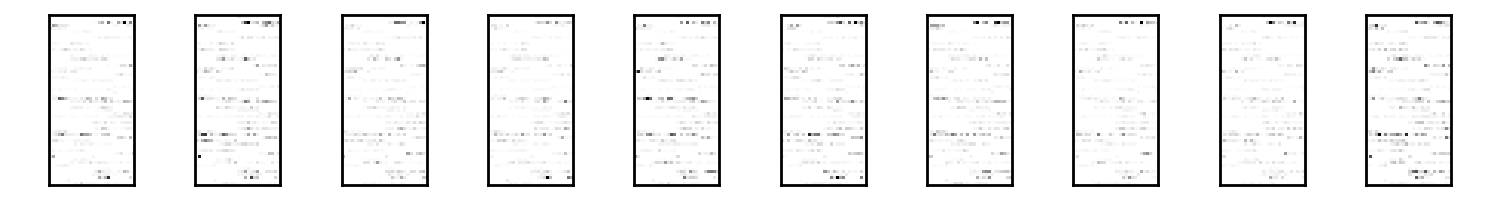}
     \end{subfigure} 
     \\
     \begin{subfigure}[b]{0.24\textwidth}
         \centering
         \includegraphics[width=\textwidth]{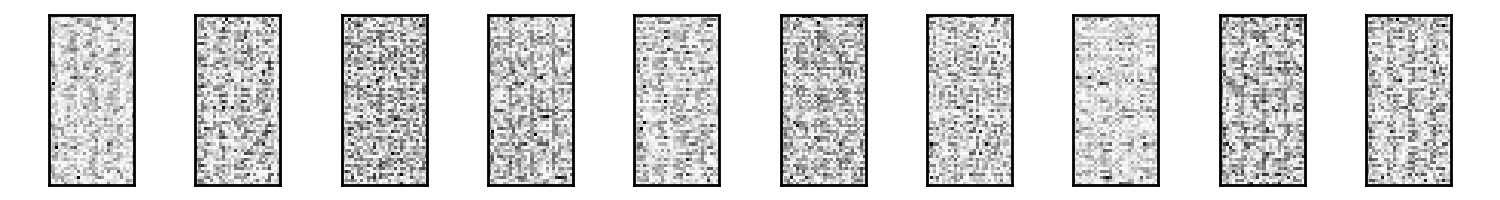}
     \end{subfigure}
     \begin{subfigure}[b]{0.24\textwidth}
         \centering
         \includegraphics[width=\textwidth]{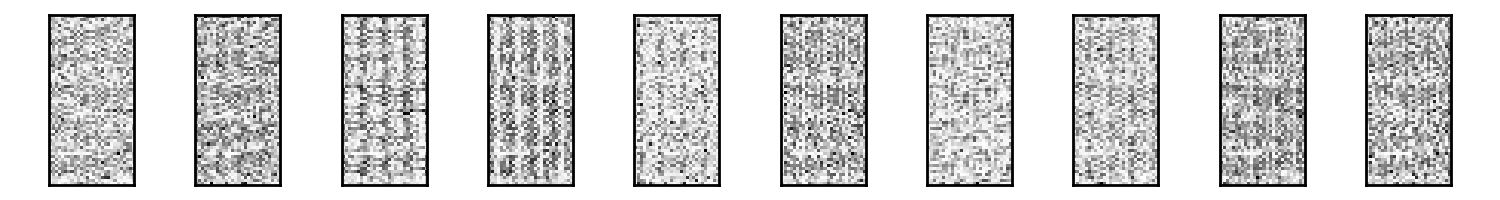}
     \end{subfigure}
     \begin{subfigure}[b]{0.24\textwidth}
         \centering
         \includegraphics[width=\textwidth]{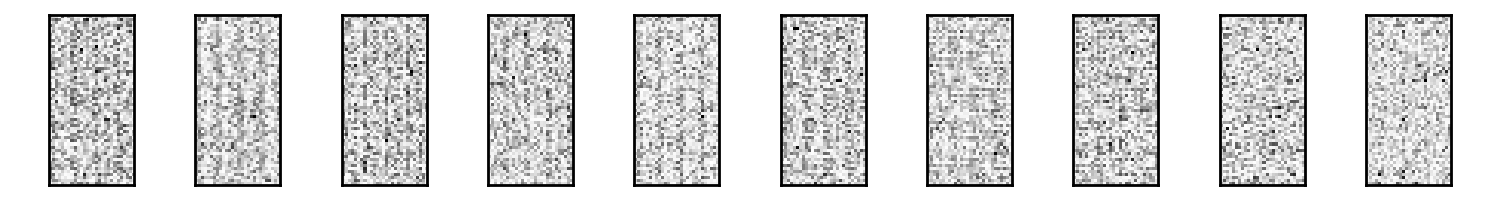}
     \end{subfigure}
     \begin{subfigure}[b]{0.24\textwidth}
         \centering
         \includegraphics[width=\textwidth]{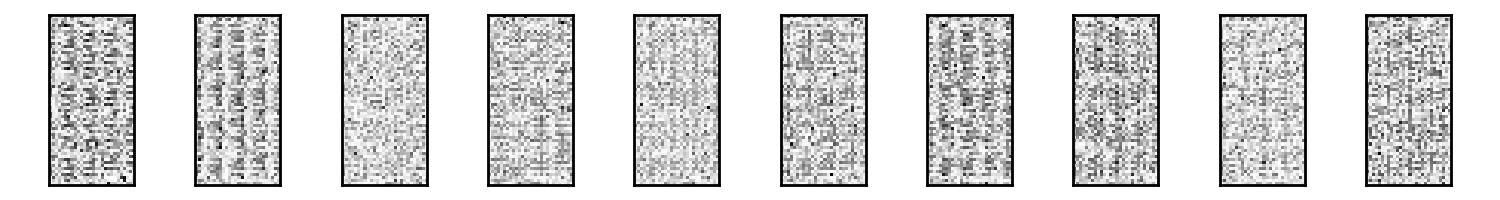}
     \end{subfigure} 
     \\
     \begin{subfigure}[b]{0.24\textwidth}
         \centering
         \includegraphics[width=\textwidth]{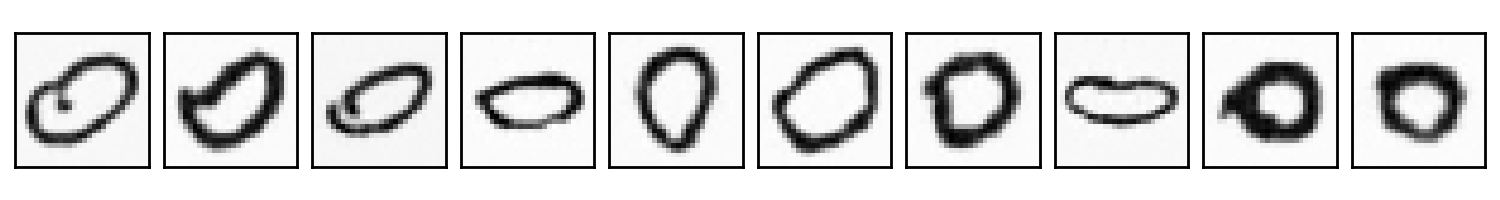}
     \end{subfigure}
     \begin{subfigure}[b]{0.24\textwidth}
         \centering
         \includegraphics[width=\textwidth]{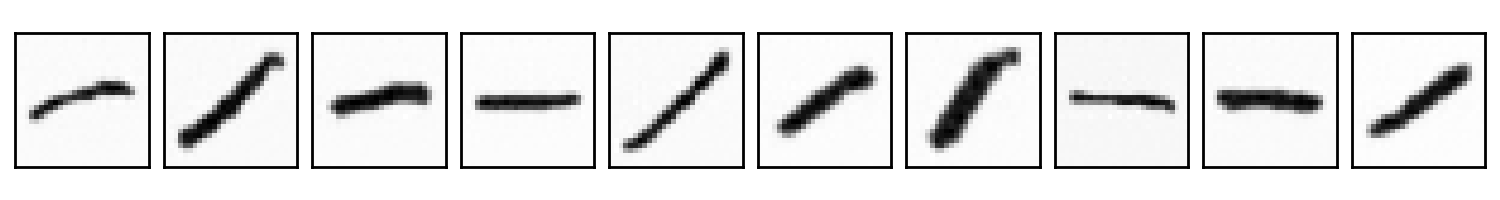}
     \end{subfigure}
     \begin{subfigure}[b]{0.24\textwidth}
         \centering
         \includegraphics[width=\textwidth]{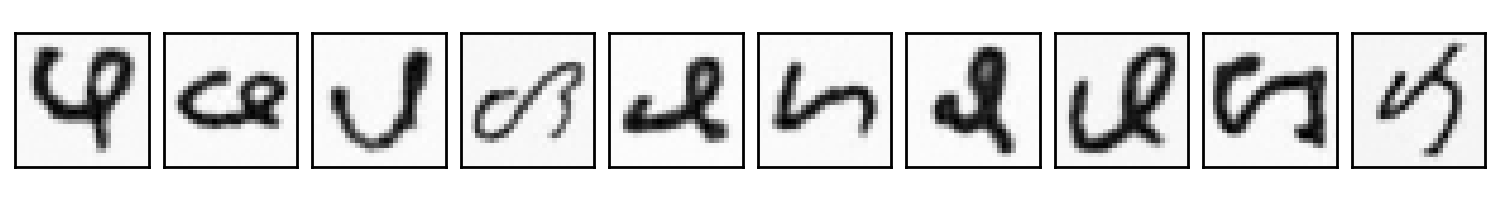}
     \end{subfigure}
     \begin{subfigure}[b]{0.24\textwidth}
         \centering
         \includegraphics[width=\textwidth]{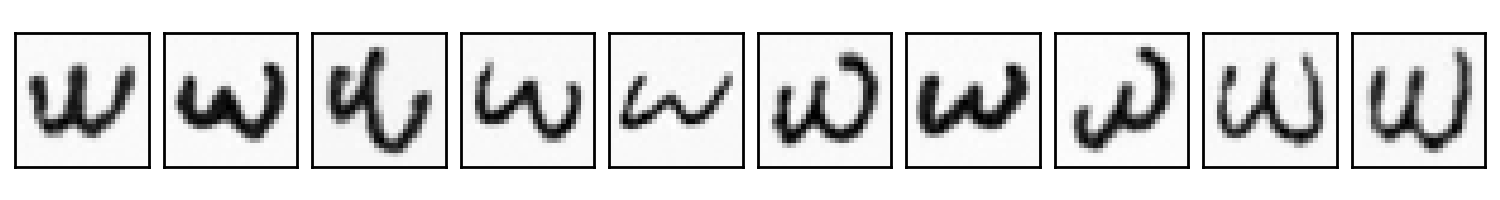}
     \end{subfigure} 
     \caption{The figure shows the data extraction and data reconstruction in EMNIST dataset. The first row shows the original data, the second row displays the invariant representations, the third row displays the variant representations, and the last row shows the reconstructed data.}
     \label{fig:UKIE-EMNIST}
\end{figure*}

\begin{figure*}[!ht]
     \centering
     \begin{subfigure}[b]{0.26\textwidth}
         \centering
         \includegraphics[width=\textwidth]{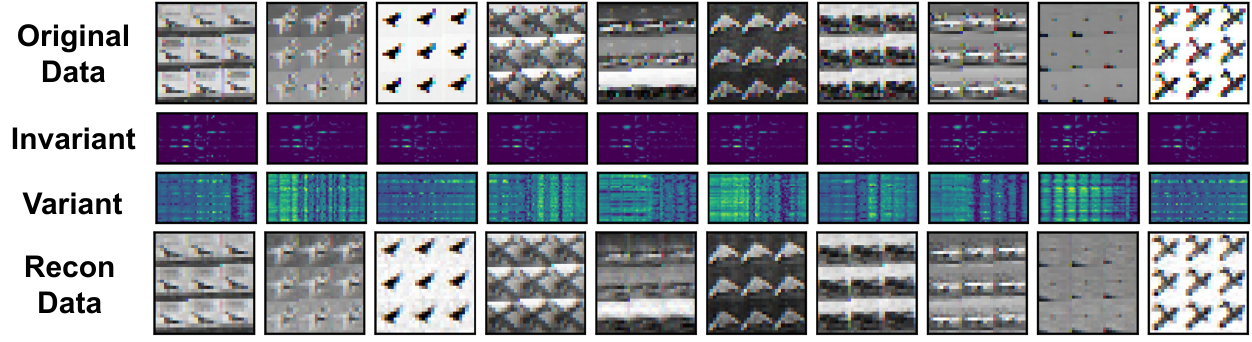}
     \end{subfigure}
     \begin{subfigure}[b]{0.23\textwidth}
         \centering
         \includegraphics[width=\textwidth]{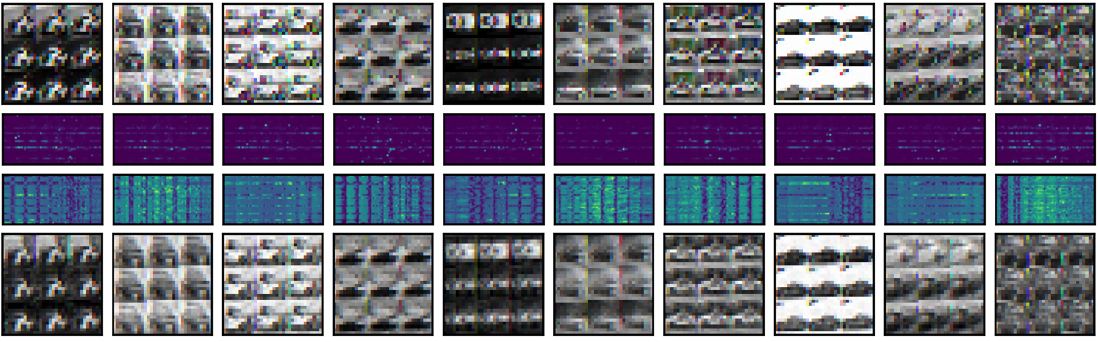}
     \end{subfigure}
     \begin{subfigure}[b]{0.23\textwidth}
         \centering
         \includegraphics[width=\textwidth]{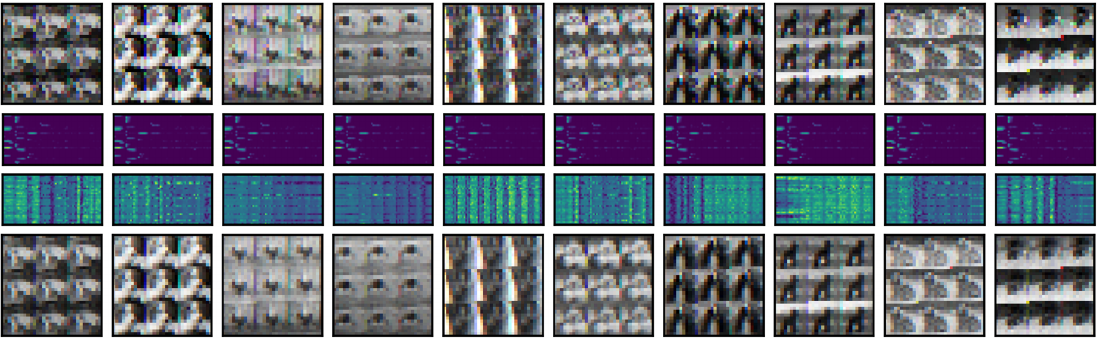}
     \end{subfigure}
     \begin{subfigure}[b]{0.23\textwidth}
         \centering
         \includegraphics[width=\textwidth]{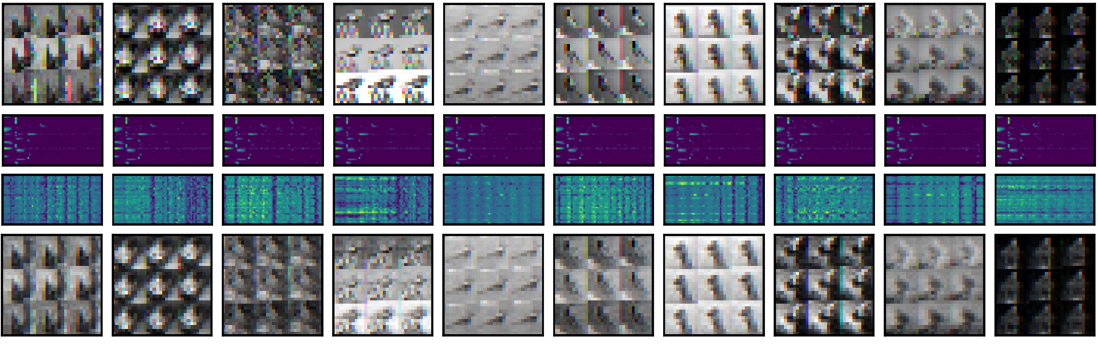}
     \end{subfigure}    
     \caption{The figure shows the data extraction and data reconstruction in CIFAR dataset. The first row shows the original data, the second row displays the invariant representations, the third row displays the variant representations, and the last row shows the reconstructed data.}
     \label{fig:UKIE-CIFAR}
\end{figure*}

{\color{duong}\subsubsection{Environment settings}
% In the network environment, the transmit power is set to unity. 
The transmit SNR is set to $20$dB for normal conditions and $5$dB for noisy conditions. Moreover, the receiver is assumed to have perfect channel parameter estimation.}

\subsection{Communication Efficiency}
\subsubsection{Compression Efficiency}\label{sec:efficient-compression}
{\color{duong}Figs.~\ref{fig:AWGN-20dB} and \ref{fig:AWGN-5dB} presents a comparison of UKIE with other semantic communication (SemCom) baselines over an AWGN channel with an SNR of 20 and 5 dB, where the compression ratio ranges from 0.04 to 0.4. For the digital communication system, a combination of 3/4 rate LDPC coding and 64-QAM modulation is employed. 
When the compression ratio is 0.4, all approaches trasnmit sufficient semantic information to effectively support the downstream task, leading to similarly high accuracy levels.
The results clearly indicate that the proposed UKIE method significantly outperforms other SemCom baselines, such as DeepJSCC and DeepSC. Additionally, UKIE demonstrates performance comparable to SemCC in terms of both classification accuracy and PSNR. The superior accuracy of SemCC is attributed to the robustness of contrastive learning, which has recently shown effectiveness in AI applications. 

When evaluating performance under more challenging Rayleigh fading channels (see Figs.~\ref{fig:Rayleigh-20dB} and \ref{fig:Rayleigh-5dB}), the robustness of UKIE becomes more evident. Specifically, UKIE consistently outperforms the compared methods in both accuracy and data reconstruction. This superior performance is attributed to UKIE's ability to extract and retain the most salient semantic information in its semantic memory. This stored knowledge remains resilient to the effects of the noisy physical channel, ensuring that the semantic information received by the decoder remains intact and lossless.
}
\begin{figure}[!h]
\centering
\includegraphics[width = 0.49\linewidth]{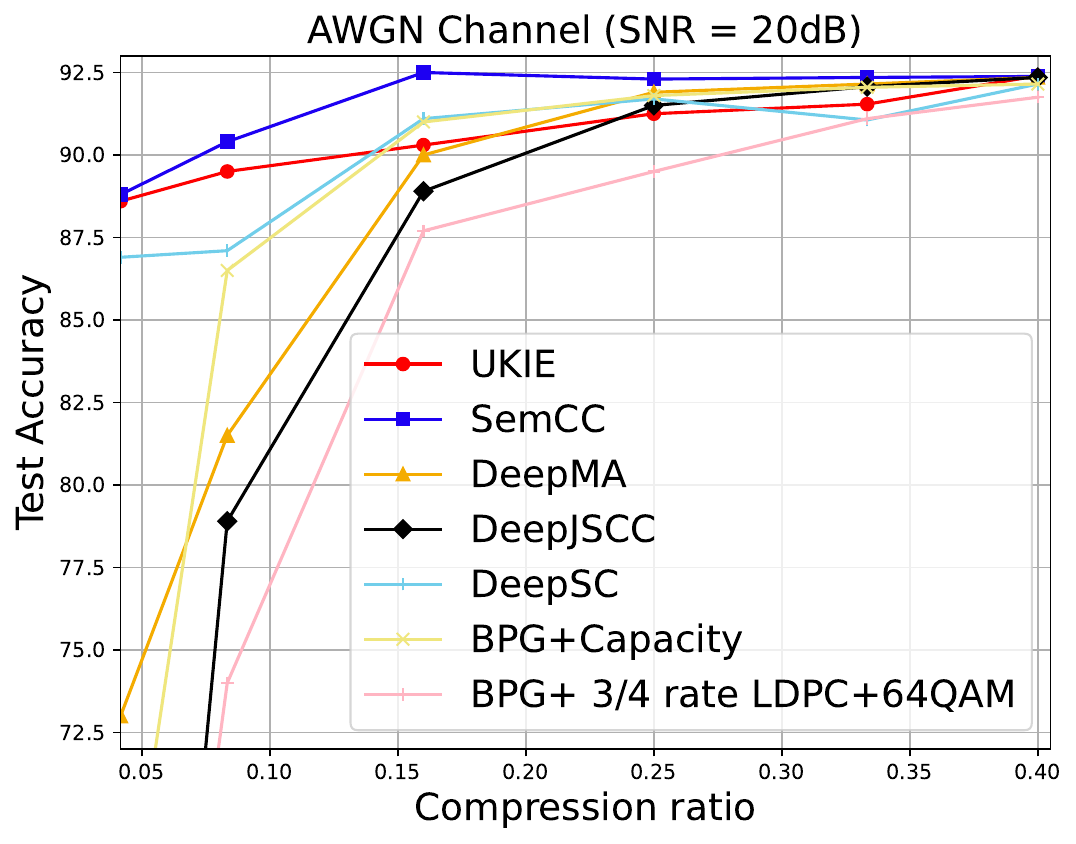}
\includegraphics[width = 0.49\linewidth]{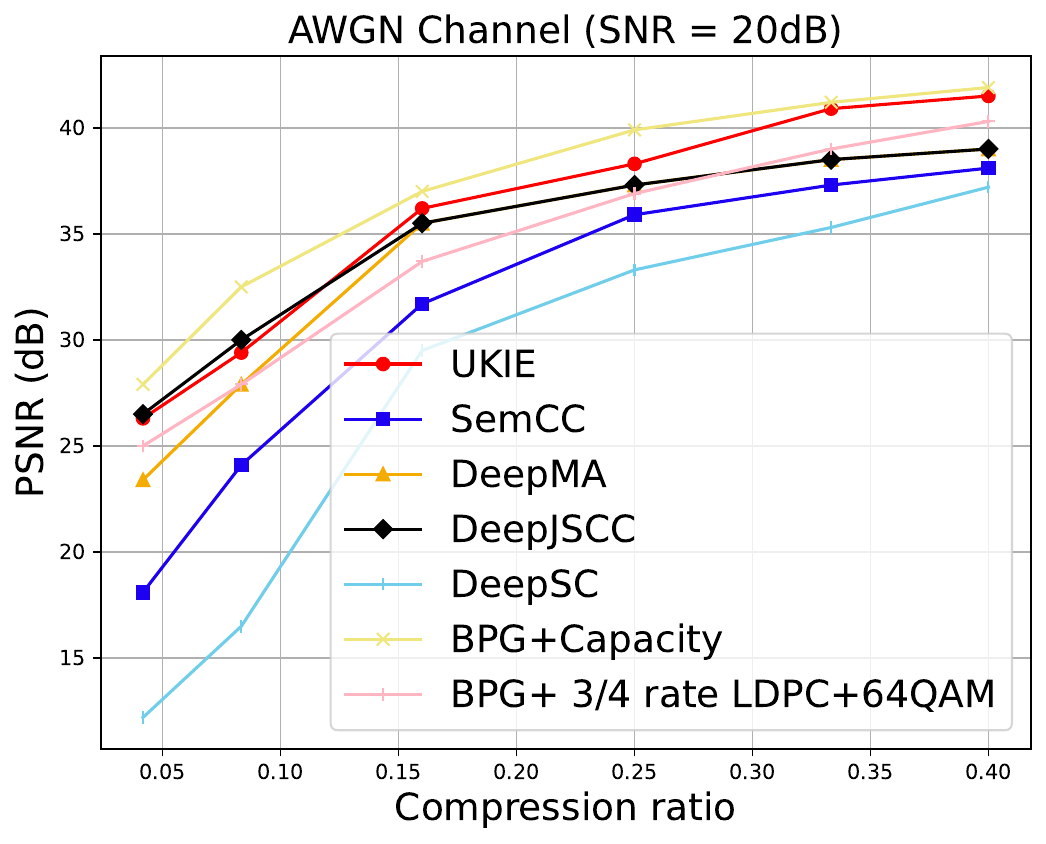}
\caption{Evaluation of data transmission under AWGN channel (SNR $= 20$dB) on CIFAR-10, the report in test accuracy (left figure) and PSNR (right figure).}
\label{fig:AWGN-20dB}
\end{figure}
\begin{figure}[!h]
\centering
\includegraphics[width = 0.49\linewidth]{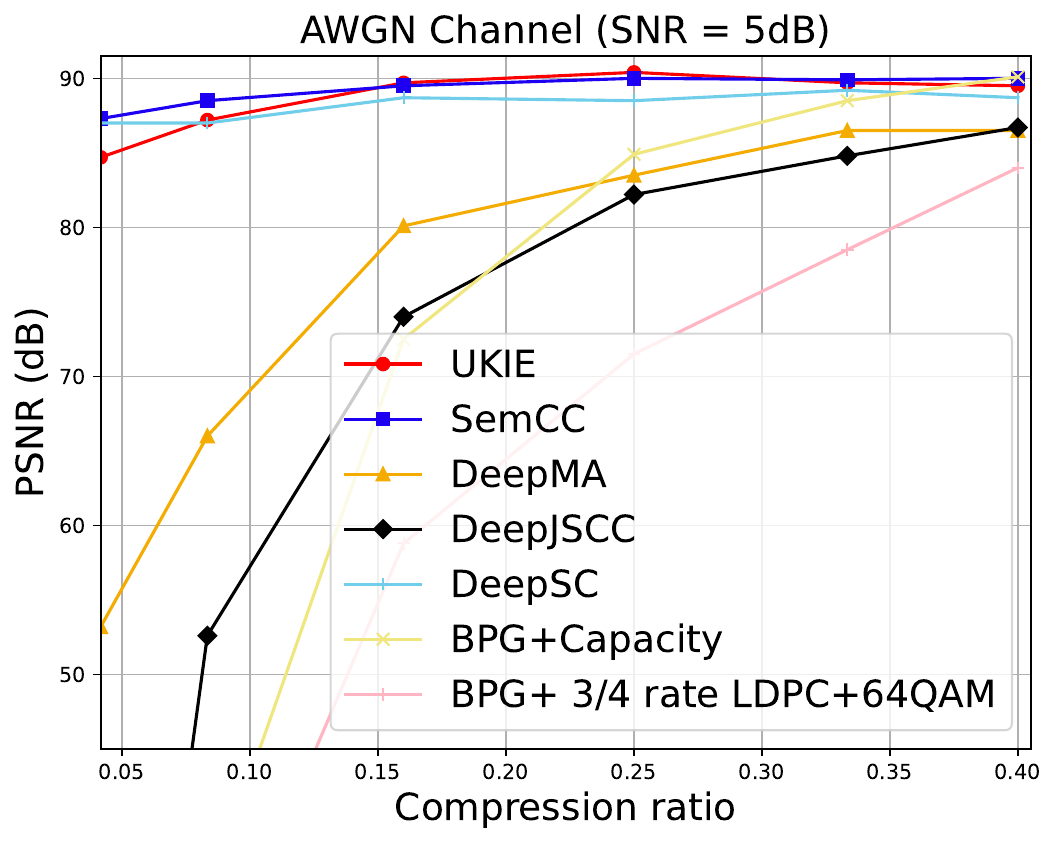}
\includegraphics[width = 0.49\linewidth]{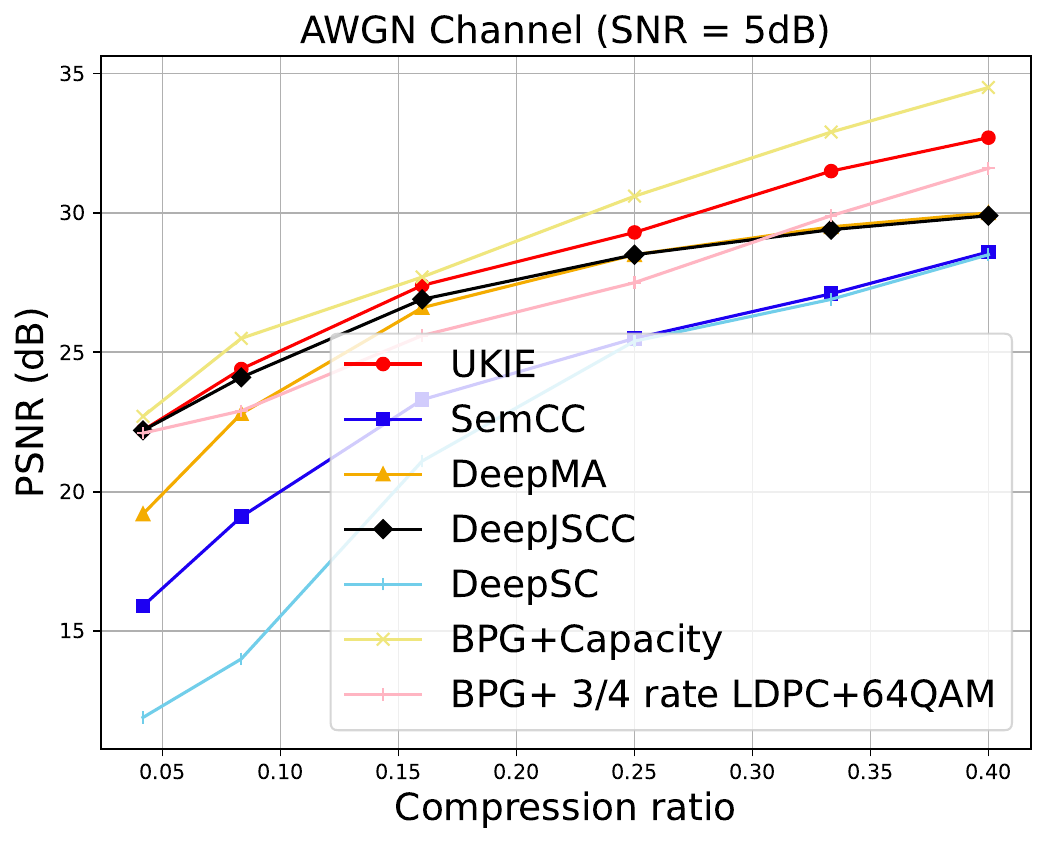}
\caption{Evaluation of data transmission under AWGN channel (SNR $= 5$dB) on CIFAR-10, the report in test accuracy (left figure) and PSNR (right figure).}
\label{fig:AWGN-5dB}
\end{figure}
\begin{figure}[!h]
\centering
\includegraphics[width = 0.49\linewidth]{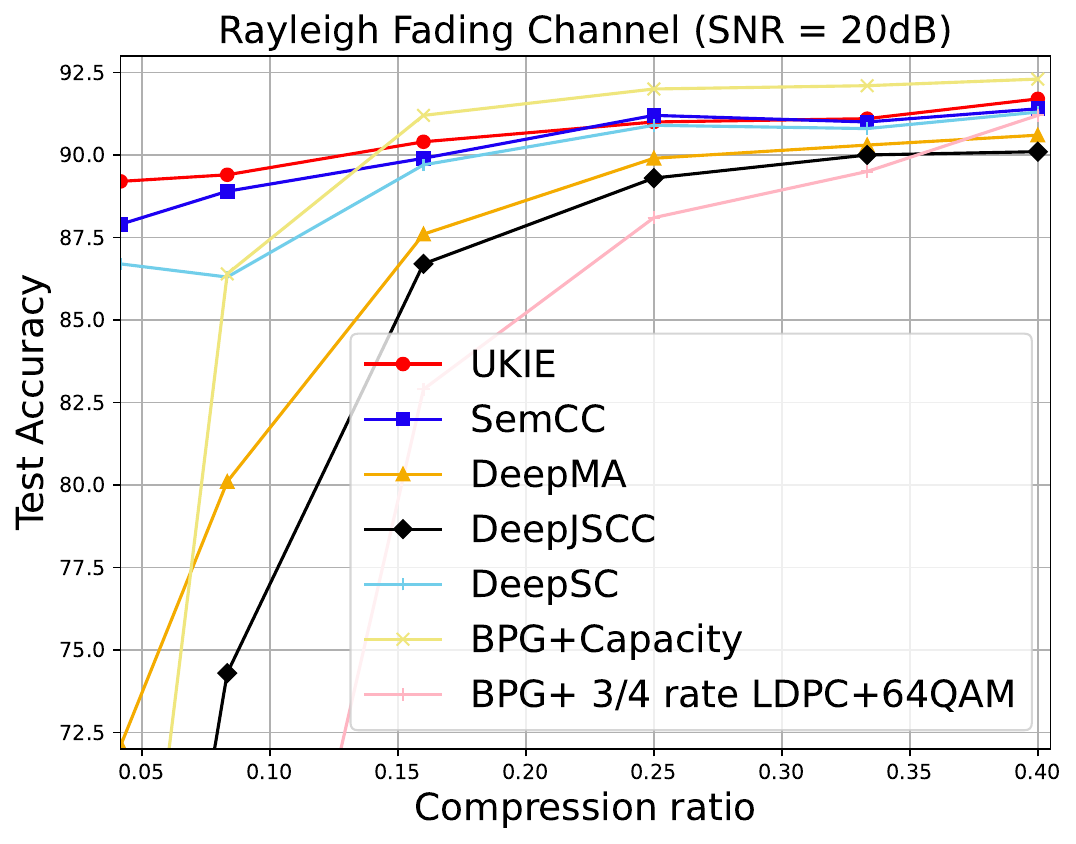}
\includegraphics[width = 0.49\linewidth]{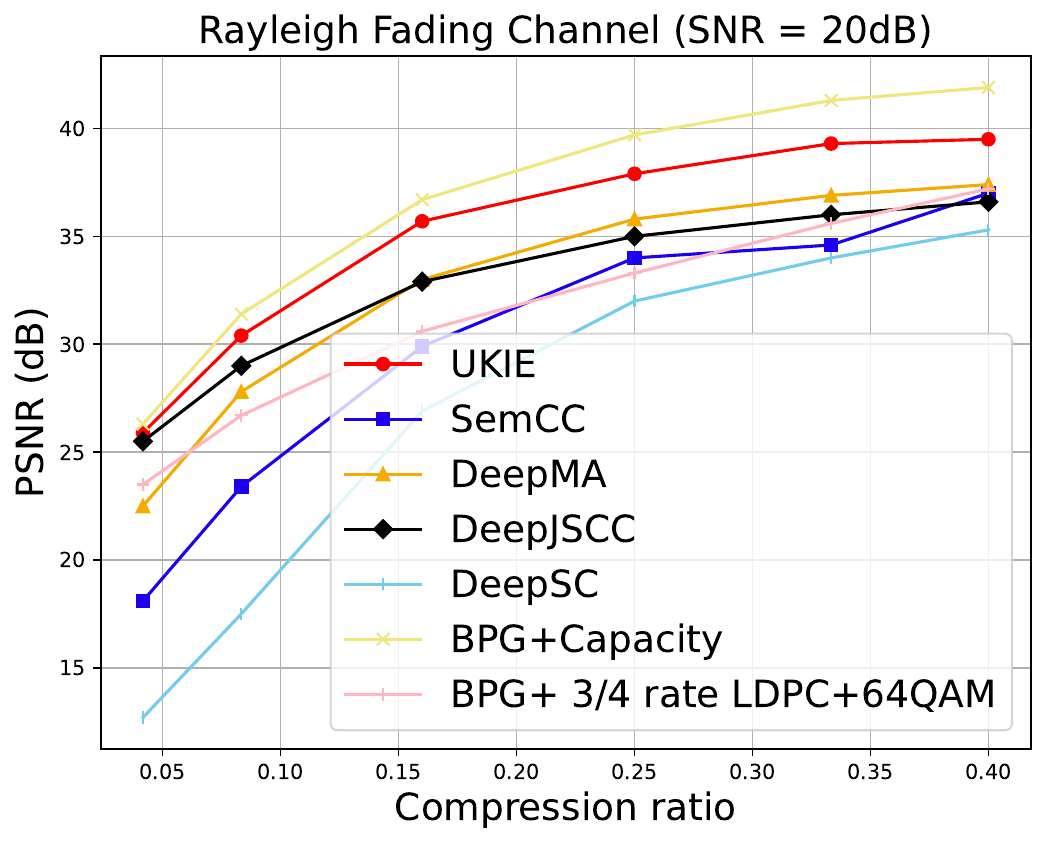}
\caption{Evaluation of data transmission under Rayleigh channel (SNR $= 20$dB) on CIFAR-10, the report in test accuracy (left figure) and PSNR (right figure).}
\label{fig:Rayleigh-20dB}
\end{figure}
\begin{figure}[!h]
\centering
\includegraphics[width = 0.489\linewidth]{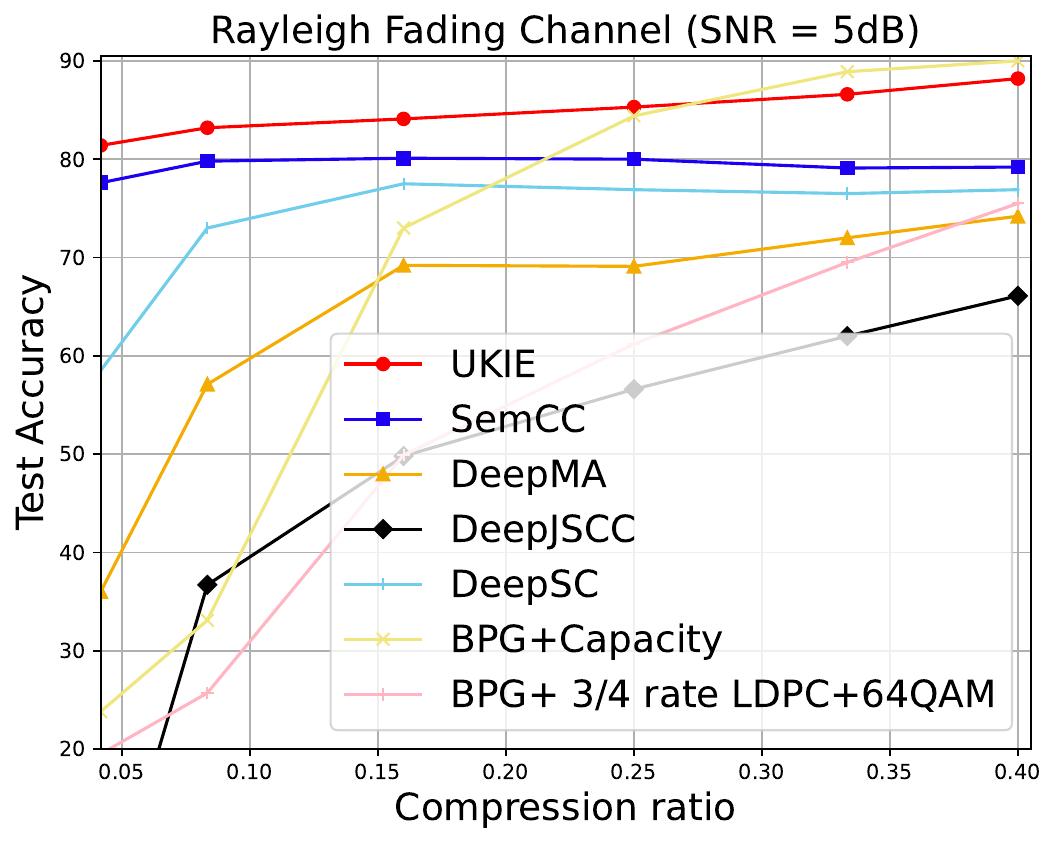}
\includegraphics[width = 0.496\linewidth]{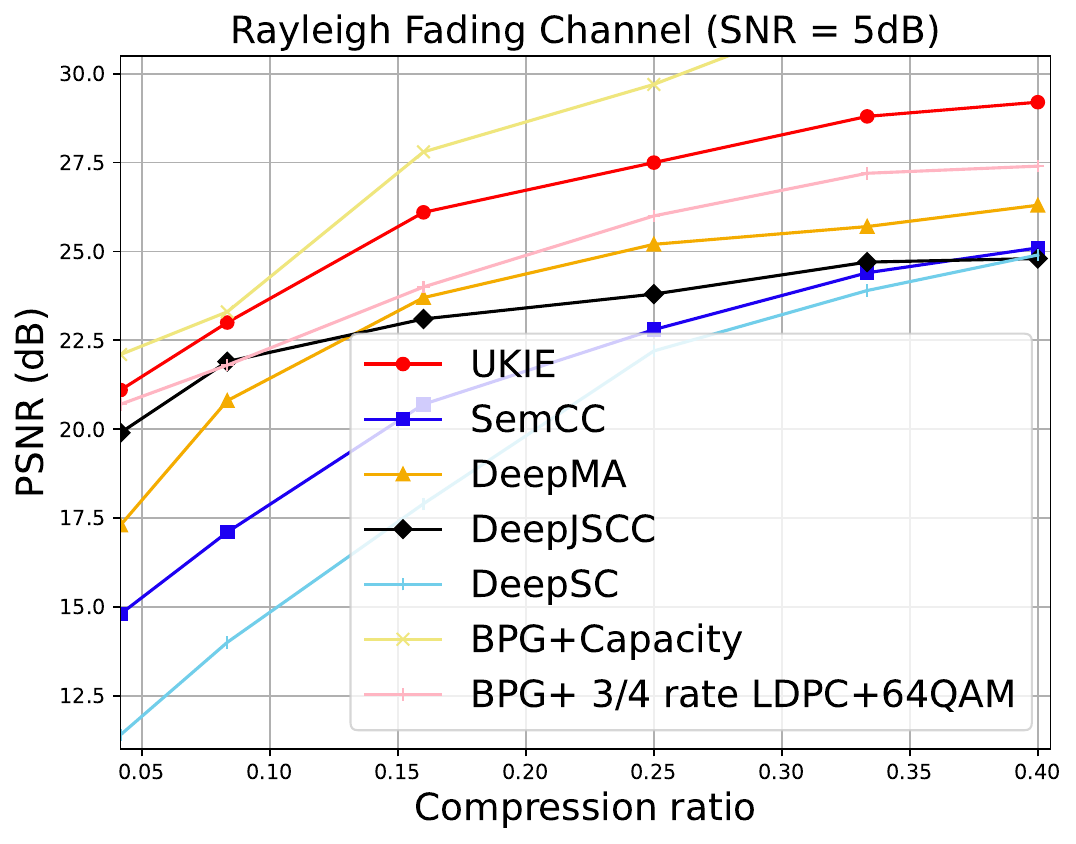}
\caption{Evaluation of data transmission under Rayleigh channel (SNR $= 5$dB) on CIFAR-10, the report in test accuracy (left figure) and PSNR (right figure).}
\label{fig:Rayleigh-5dB}
\end{figure}
\subsubsection{Communication Efficiency}
\label{sec:comm-eff}
{\color{duong} 
To evaluate the communication efficiency and generalization capability of UKIE, we trained the model under a fixed setting and tested it across different scenarios. Specifically, the semantic encoder and decoder were trained with a fixed SNR of 5 dB, and the compression ratios were set to 0.18 and 0.09, respectively. Figure~\ref{fig:Rayleigh-Compression} presents the test accuracy comparison under Rayleigh fading channels for various SNR levels.

The results indicate that the proposed approach achieves competitive test accuracy, as the semantic information is effectively extracted and stored in the semantic memory. The transmitted data primarily contains non-causal representations, which do not capture the most salient features for classification tasks. However, these non-causal representations remain important for data transmission and classification performance. This is evidenced by the decline in test accuracy when using a high compression ratio, particularly when the SNR approaches 0 dB.
These findings suggest that proposed UKIE maintains robustness in real-world conditions.
}
\begin{figure}[!h]
\centering
\includegraphics[width = 0.49\linewidth]{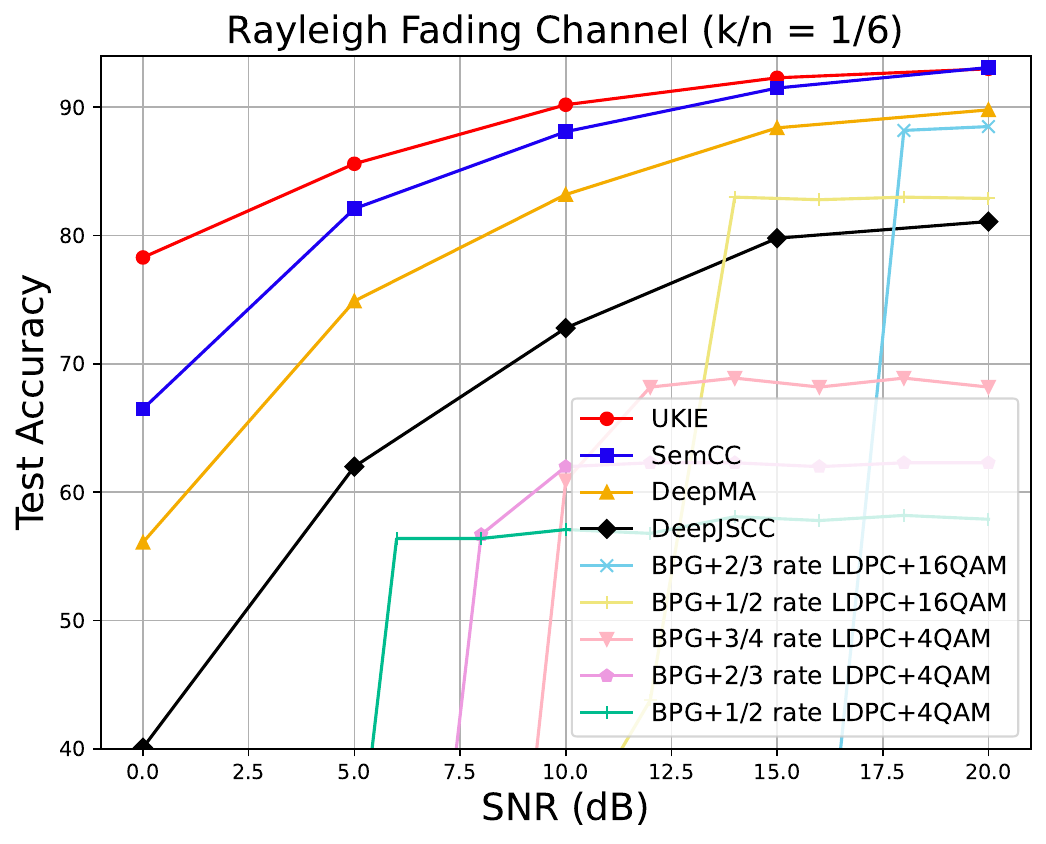}
\includegraphics[width = 0.49\linewidth]{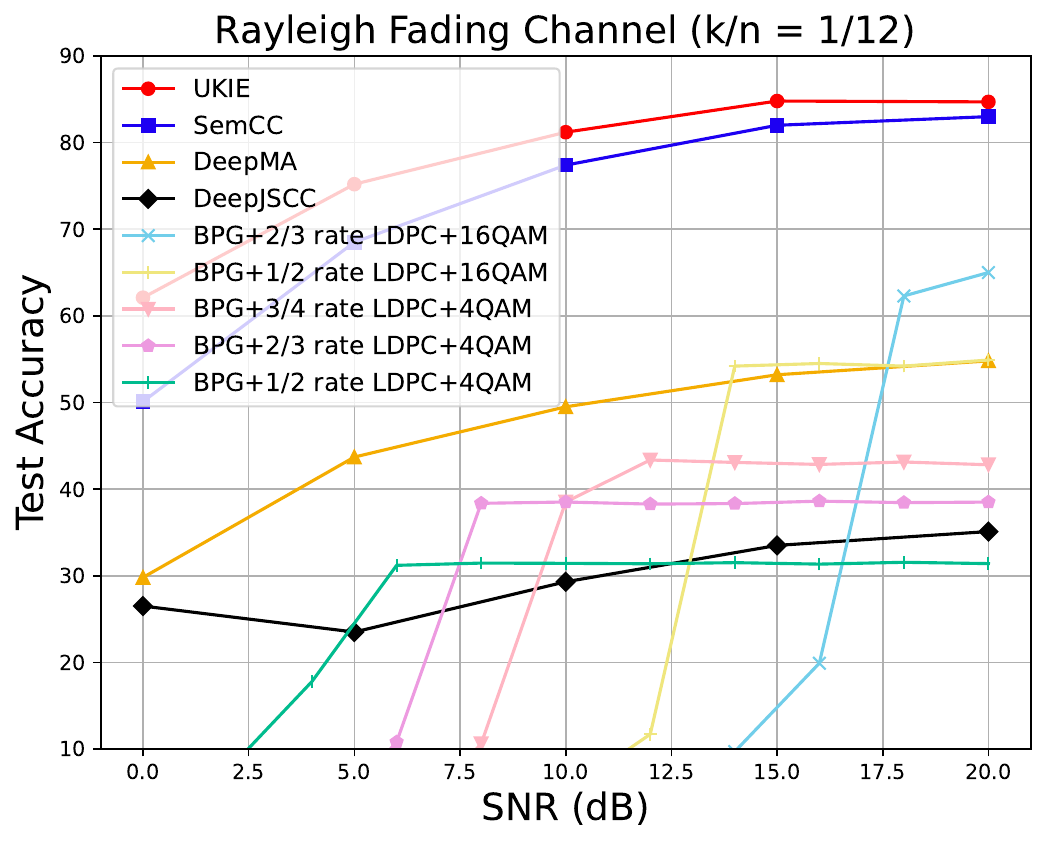}
\caption{Evaluation of data transmission under Rayleigh channel on CIFAR-10, the report in test accuracy with compression ratio = $0.18$ (left figure) and $0.09$ (right figure). The semantic encoder and decoder are trained at SNR = 5dB.
}
\label{fig:Rayleigh-Compression}
\end{figure}

\subsection{Evaluations on Computational Efficiency}\label{sec:efficient-computation}
{\color{duong}\begin{table}[h]
    \centering
    \caption{Comparison of the Computational Efficiency Characteristics of UKIE with Baseline Methods}
    \label{tab:comp-eff}
    \resizebox{1\columnwidth}{!}{%
    \begin{tabular}{l l l l l l l}
        \toprule
        \textbf{Method} & \textbf{Enc.} & \textbf{Dec.} & {\textbf{\# of Params.}} & \textbf{Train} & \textbf{Test}   \\
                        & \textbf{Model} & \textbf{Model} & \textbf{Enc. / Dec.} & \textbf{times} & \textbf{times}\\
        \midrule
        UKIE        & ResNet9$\times2$ & ResNet18 & 10.2M/9.8M                  & 1h08m & 2.23ms \\
        DeepSC      & ViT-12           & ViT-12             & 85.1M/85.1M      & 4h32m & 5.27ms \\
        DeepJSCC    & ResNet18         & ResNet18   & 11.4M/11.4M      & 0h59m & 2.71ms \\
        DeepMA      & ResNet18         & ResNet18   & 11.4M/11.4M      & 1h02m & 2.63ms \\
        SemCC       & ResNet18         & ResNet18   & 11.4M/11.4M      & 1h39m & 2.87ms \\
        RS coding          & N/A       & N/A     & N/A   & N/A & 4.14ms\\
        Turbo coding       & N/A       & N/A     & N/A   & N/A & 8.59ms\\
        \bottomrule
    \end{tabular}}
\end{table}
% \begin{table}[h]
%     \centering
%     \caption{Survey of Methods, Models, Parameters, and GPU Usage}
%     \label{tab:method-survey}
%     \begin{tabular}{l l r l l l l}
%         \toprule
%         \textbf{Method} & \textbf{Model} & {\textbf{\# of}} & \textbf{GPU}    & \textbf{Train} & \textbf{Test}   \\
%                         &                & \textbf{Params.} & \textbf{Usage}  & \textbf{times} & \textbf{times}\\
%         \midrule
%         UKIE        & ResNet9$\times2$    & 9.8M                  & 16 GB & 16 GB & 16 GB\\
%         DeepSC      & ResNet18            & 11.4M                 & 24 GB & 16 GB & 16 GB\\
%         DeepJSCC    & ResNet18            & 11.4M                 & 40 GB & 16 GB & 16 GB\\
%         DeepMA      & ResNet18            & 11.4M                 & 80 GB & 16 GB & 16 GB\\
%         SemCC       & ResNet18            & 11.4M                 & 80 GB & 16 GB & 16 GB\\
%         \bottomrule
%     \end{tabular}
% \end{table}
Table~\ref{tab:comp-eff} summarizes of the computational costs of UKIE compared to various baseline methods, including the number of parameters, as well as training and testing times. UKIE requires a comparable number of parameters to DeepJSCC, DeepMA, and SemCC. This is primarily because UKIE employs two shallow ResNet9 networks for data and knowledge extraction, in contrast to the deeper ResNet18 used in other methods.
Additionally, by leveraging two shallow ResNet9 networks, UKIE achieves slightly faster testing times than the other baselines. In our study, we adopt the framework of DeepSC \cite{2021-SEM-DeepSC}, where a transformer architecture is used to extract semantic information. However, we utilize a Vision Transformer (ViT) \cite{2021-ViT-VisionTransformer} to fit our evaluations in image dataset instead of a standard transformer, which is typically used for text datasets. Due to the higher computational complexity of ViTs, DeepSC requires significantly more computation resources to achieve optimal performance.}

% \subsection{Semantic Channel Efficiency}\label{sec:semantic-channel}
% \subsubsection{Transmission of Invariant Knowledge}
% \hl{Content comes here}

% \subsubsection{Sparse Update}
% \hl{Content comes here}

% \subsubsection{Semantic Memory Invariant Properties}
% \hl{Content comes here}

\subsection{Performance of UKIE on different challenging AI dataset}
\subsubsection{Robustness of Causality Invariance Learning}
\label{sec:retreiving-symmetries}
Table~\ref{tab:dataset-learning} illustrates the learning performance of UKIE across various datasets. UKIE successfully achieves robust invariance learning, as evidenced by the notably low invariant loss ($\mathcal{L}_{\textrm{iv}}\sim 0.03$). At the same time, the variant loss exhibits a contrasting high value ($\mathcal{L}_{\textrm{v}}\sim 0.98$), indicating a clear separation between invariant and variant representations. This distinction contributes significantly to the meaningful information related to the input data (mentioned in Lemma~\ref{lemma:meaningful-representations}). The efficient data reconstruction, indicated by the low reconstruction loss ($\mathcal{L}_{\textrm{rec}} \sim 0.03$), further supports these meaningful information properties.
\begin{table}[!ht]
\centering
\caption{Experiments on the robustness of causality invariance learning. $\downarrow$: lower is better. $\uparrow$: higher is better. 95\% confidence intervals are from 5 trials.}
\label{tab:dataset-learning}
\resizebox{1\columnwidth}{!}{%
  \begin{tabular}{lccccc}
    \toprule
    Method & $\mathcal{L}_{\textrm{rec}}~\downarrow$ & $\mathcal{L}_{\textrm{iv}}~\downarrow$ & $\mathcal{L}_{\textrm{v}}~\uparrow$ & Accuracy $\uparrow$ \\
    \midrule
    EMNIST   & $0.00049 \pm  0.001$ & $1.5\times 10^{-5}$ & $0.99$ & $99.18\% \pm 0.21$\\
    CIFAR-10 & $0.00072 \pm 0.0008$ & $1.7\times 10^{-3}$ & $0.99$ & $73.41\% \pm 1.67$\\
    CINIC-10 & $0.00032 \pm 0.00012$ & $1.5\times 10^{-5}$ & $0.99$ & $72.67\% \pm 2.13$\\
    CELEB-A  & $0.0007 \pm 0.001$ & $2.0\times 10^{-5}$ & $0.98$ & $72.67\% \pm 1.20$\\
    \bottomrule
  \end{tabular}}
\end{table}
\begin{table}[!ht]
\centering
\caption{Comparison to prior VAE-based and GAN-based representation learning methods. The evaluations are measured in terms of PSNR. $95\%$ confidence intervals are from 5 trials.}
\label{tab:comparison-vae}
\resizebox{1\columnwidth}{!}{%
\begin{tabular}{lcccc}
    \toprule
    Method & MNIST & EMNIST & CIFAR-10 & CINIC-10 \\
    \midrule
    AE  & $35.32 \pm 0.12$ & $34.71 \pm 0.42$ & $33.97 \pm 0.42$ & $32.63 \pm 0.14$ \\
    VAE \cite{2013-DL-VAE} & $32.26 \pm 0.12$ & $33.71 \pm 0.42$ & $33.05 \pm 0.42$& $34.98 \pm 0.14$ \\
    Factor-VAE \cite{2018-DL-DisentanglingFactorising} & $34.32 \pm 0.07$ & $32.56 \pm 0.05$ & $34.82 \pm 0.05$ & $34.74 \pm 0.09$ \\
    Dip-VAE \cite{dipave} & $34.38 \pm 0.08$ & $35.26 \pm 0.08$ & $33.02 \pm 0.08$ & $33.97 \pm 0.06$ \\
    $\beta$-VAE \cite{2016-DL-BetaVAE} & $35.76 \pm 0.12$ & $35.02 \pm 0.03$ & $34.36 \pm 0.03$ & $34.25 \pm 0.06$ \\
    InfoGan \cite{2016-GAN-InfoGan} & $35.26 \pm 0.02$ & $36.72 \pm 0.07$ & $36.06 \pm 0.12$ & $37.15 \pm 0.08$ \\
    Unsupervised GAN\cite{2020-GAN-UnGan} & $37.86 \pm 0.01$ & $37.62 \pm 0.03$ & $37.56 \pm 0.11$ & $38.28 \pm 0.07$\\
    Intepretable GAN\cite{2021-GAN-IPGAN} & $38.72 \pm 0.05$ & $38.91 \pm 0.06$ & $38.02 \pm 0.07$ & $37.42 \pm 0.12$\\
    UKIE (Ours) & $\textbf{42.27} \pm 0.04$ & $\textbf{41.83} \pm 0.04$& $\textbf{39.29} \pm 0.03$ & $\textbf{38.41} \pm 0.03$ \\
    \bottomrule
\end{tabular}}
\end{table}
\subsubsection{{\color{duong}Why do we not use other data reconstruction baselines?}}
\label{sec:decode-with-knowledge}
Table~\ref{tab:comparison-vae} shows the data reconstruction efficiency of UKIE compared with those of other VAE-based architectures (i.e., VAE \cite{2013-DL-VAE}, Factor-VAE \cite{2018-DL-DisentanglingFactorising}, Lie-VAE \cite{2021-DL-LieDisentanglement}, $\beta$-VAE \cite{2016-DL-BetaVAE}). In our experiments, we consider the data reconstruction efficiency in terms of PSNR. UKIE model demonstrates a competitive level of PSNR compared to other methods. Notably, UKIE achieves significantly higher PSNR compared to VAE. This indicates that a substantial amount of important information is effectively transmitted to the pair of invariant-variant representations that are extracted appropriately compared to bottlenecked representations from other state-of-the-art baselines.
% \begin{figure}[!ht]
% \centering
% \includegraphics[width = 0.7\linewidth]{image-lib/IV-MNIST-Loss.png}
% \caption{The training performance of UKIE on MNIST dataset.}
% \label{fig:UKIE-MNIST-training}
% \end{figure}

\subsection{Ablation Test}
\label{sec:ablation-test}

\subsubsection{Information Bottleneck in Semantic Encoder} 
We investigate the compact and structured properties of invariant and variant representations by evaluating UKIE with different embedding layer sizes. Our experiments focus on varying channel sizes, examining configurations with $C=4,8,16,24,32,48,64$ channels. By adjusting the channel size $C$, we calculate the total latent dimension $d_z = d_{z_V} + d_{z_K}$ as the sum of the dimensions of invariant and variant representations, given by $C\times 8\times 8 = 64\times C$. The results are shown in Fig.~\ref{fig:IB}.

From Fig.~\ref{fig:IB}, it is evident that the accuracy of invariant classification significantly decreases when the channel size of the latent representations is reduced to $C\leq 16$. This decrease in performance is attributed to the corresponding reduction in the size of the invariant representations, which leads to insufficient information for data inference. Conversely, the data reconstruction performance across different levels of information bottlenecks is stable. This stability is attributed to the variant representation, which provides adequate information for joint data reconstruction.

These findings suggest that our causality invariance learning approach can be enhanced by optimizing the balance between invariant knowledge and variant data sizes. By simultaneously increasing the invariant knowledge size and reducing the variant data size, sufficient data recognition can be achieved with lower communication costs over physical channels. This saved capacity can then be allocated to semantic knowledge, which is sparsely updated through the semantic channels. Consequently, this approach can significantly boost the communication efficiency of the SemCom system.
\begin{figure}[!ht]
\centering
\includegraphics[width = 0.47\linewidth]{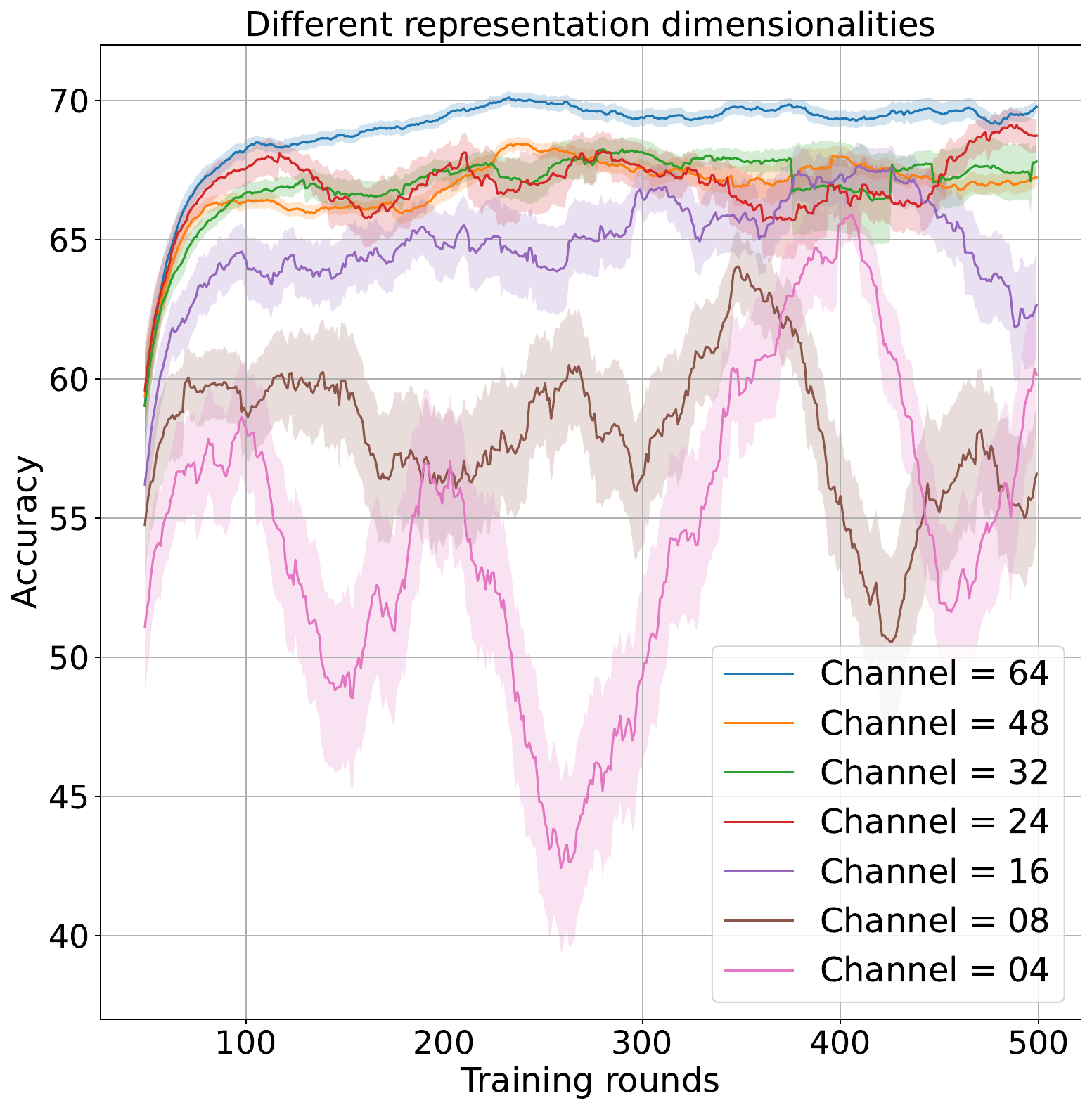}
\includegraphics[width = 0.5\linewidth]{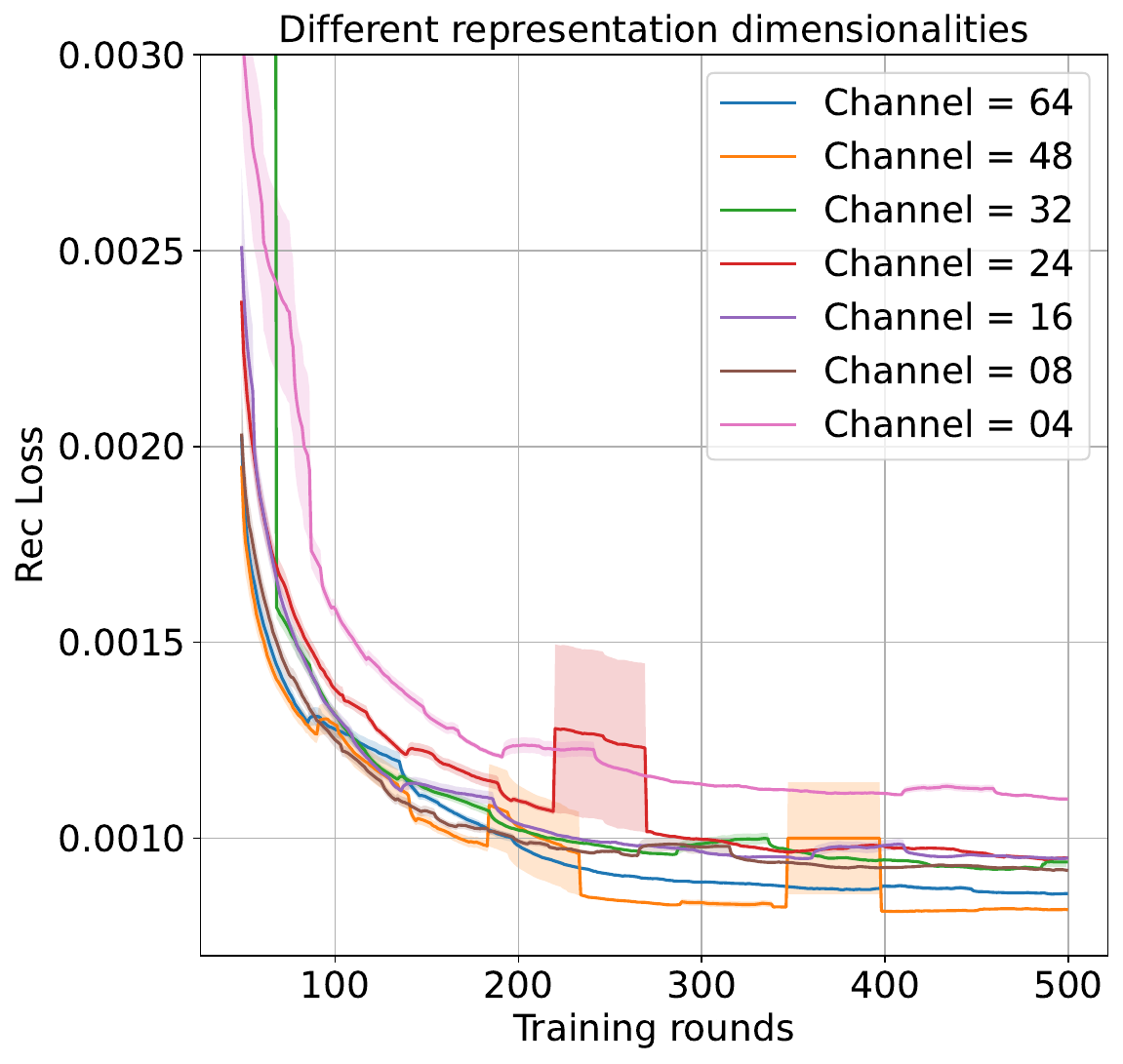}
\caption{Illustrations of the performance of Informational Bottlenecks.}
\label{fig:IB}
\end{figure}
\subsubsection{Assessing the Extent of Knowledge Extraction from Data}
\begin{table}[!ht]
\caption{Assessment of UKIE on different domains. The assessment is performed on Colored-MNIST, where each domain represents the images with a different degree of correlation between color and label. We assess two characteristics: 1) Accuracy: we use only invariant representations to predict the labels in each domain, and 2) Invariance: we measure the disparity between each generated $z^i_K$ within the target domain $i$ and the invariant representations $z^j_K$ generated from the dataset of the training domain $j$.}
\label{tab:ukie-domain}
\centering
\begin{tabular}{lccc}
\hline
\textbf{Algorithm} & $\textbf{+90\%}$ & $\textbf{+80\%}$ & $\textbf{-90\%}$ \\ \hline \hline
\multicolumn{4}{c}{\textbf{Domain 1 (+90\%})}                                                                                             \\ \hline
Accuracy                            & 89.5 $\pm$ 0.1                  & 82.3 $\pm$ 0.5                  & 73.2 $\pm$ 0.3                  \\
Invariant                           & 0.0003 $\pm$ 10e-5              & 0.0002 $\pm$ 10e-5              & 0.0002 $\pm$ 10e-5              \\ \hline \hline
\multicolumn{4}{c}{\textbf{Domain 2 (+80\%})}                                                                                  \\ \hline
Accuracy                            & 73.9 $\pm$ 0.1                  & 82.3 $\pm$ 0.5                  & 73.2 $\pm$ 0.3                  \\
Invariant                           & 0.0003 $\pm$ 10e-5              & 0.0002 $\pm$ 10e-5              & 0.0002 $\pm$ 10e-5              \\ \hline \hline
\multicolumn{4}{c}{\textbf{Domain 3 (-90\%})}                                                                                  \\ \hline
Accuracy                            & 89.5 $\pm$ 0.1                  & 80.7 $\pm$ 0.5                  & 73.2 $\pm$ 0.3                  \\
Invariant                           & 0.0003 $\pm$ 10e-5              & 0.0002 $\pm$ 10e-5              & 0.0002 $\pm$ 10e-5              \\ \hline
\end{tabular}
\end{table}
To determine the extent of invariant knowledge extraction from the data and the appropriate semantic knowledge size to represent the data from each label, we keep the size of the embedding layer constant (i.e., $C_\textrm{IB}=32$), while simultaneously varying the number of invariant channels and setting the variant channel size to $C_\textrm{v} = C_\textrm{IB} - C_\textrm{iv}$. This approach is adopted to ensure that the performance of data reconstruction remains unaffected by the informational bottleneck. 
The results are shown in Fig.~\ref{fig:unbiased-knowledge}. 
As can be seen, UKIE performs optimally when the number of invariant knowledge channels are set to $C \geq 16$. Additionally, the data reconstruction remains consistent as the invariant knowledge size is increased. This indicates that higher data reconstruction efficiency can be achieved by using a larger invariant knowledge size (e.g., $C_\textrm{iv} \geq 24$, $C_\var \leq 8$). This also implies that a high compression ratio of $(3\times 32\times 32) : (8\times 64) = 6:1$ can be attained without quality loss in the variant data, which needs to be transmitted over the physical channel.
\begin{figure}[!ht]
\centering
\includegraphics[width = 0.475\linewidth]{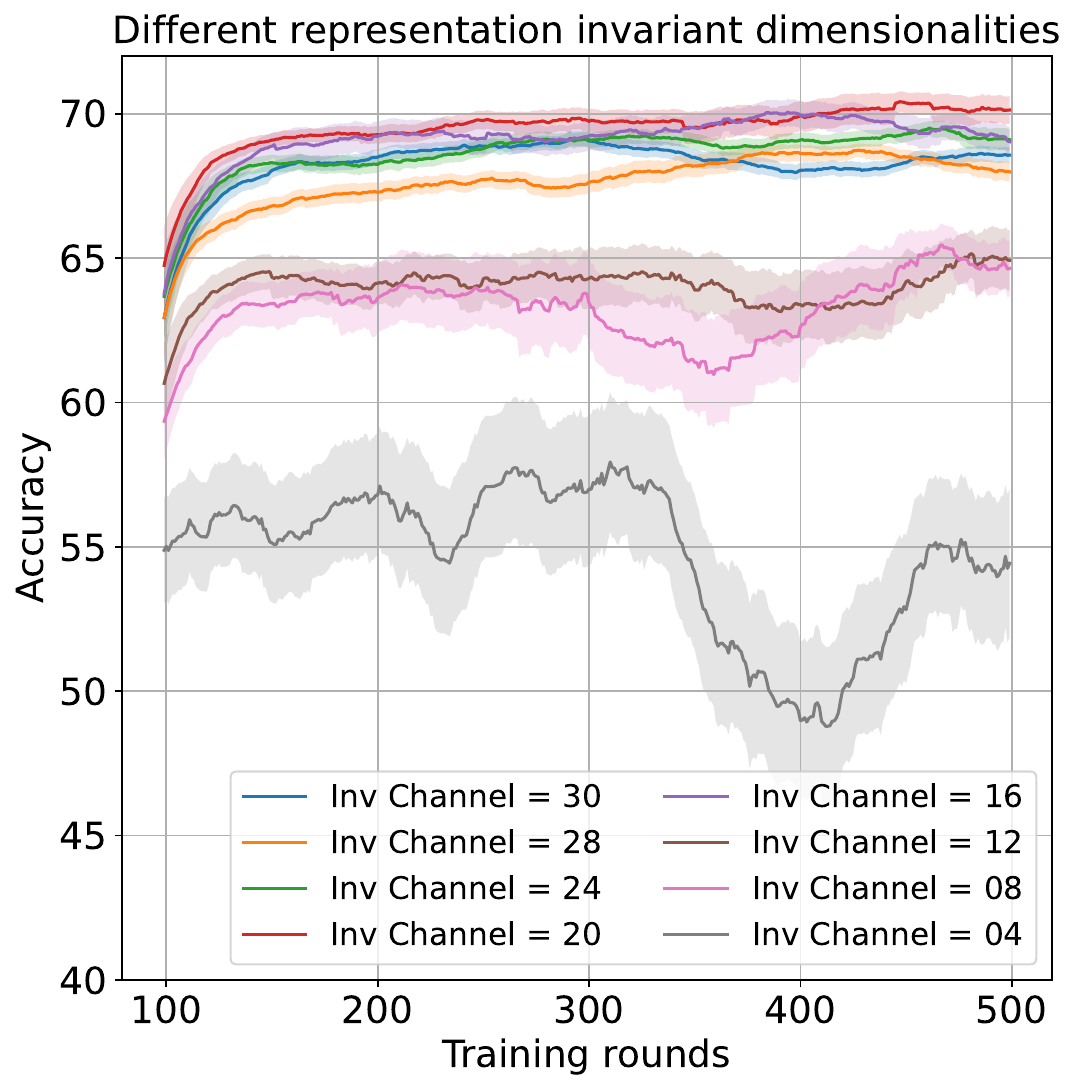}
\includegraphics[width = 0.51\linewidth]{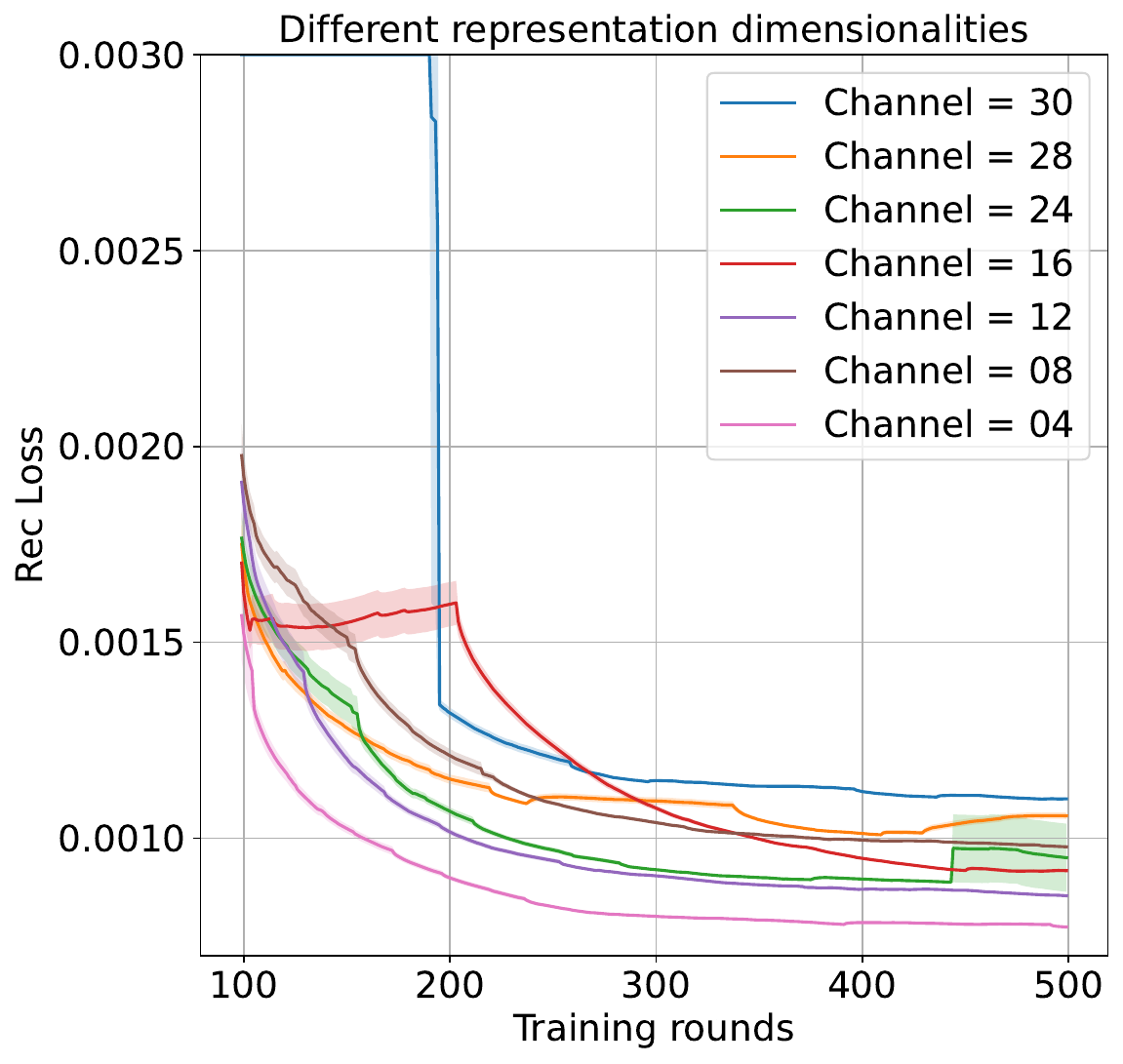}
\caption{Illustrations of the efficacy of invariant knowledge encoder.}
\label{fig:unbiased-knowledge}
\end{figure}

% \subsubsection{Ablation test on GAN architecture}\label{sec:gan-architecture}
% \hl{content comes here}

\subsubsection{Ablation test on GAN via adjusting loss coefficients}
% \subsubsection{Impact of $\mathcal{L}_\textrm{gtc}$, $\mathcal{L}_\textrm{iv}$, $\mathcal{L}_\textrm{v}$ to UKIE training}
We assess the training of the UKIE model under various settings of the $\alpha_\textrm{gtc}$, $\alpha_\textrm{v}$, and $\alpha_\textrm{iv}$ parameters. The objective is to optimize the model by understanding the impact of each component of training loss on UKIE's overall performance. \emph{Our aim is to reduce negative transfer between tasks.} We start with the initial setting of $\alpha_\textrm{gtc}=1$, $\alpha_\textrm{rec}=1$, $\alpha_\textrm{iv}=1$, $\alpha_\textrm{v}=1$. Fig.~\ref{fig:learning-coeff} reveals the results of training under different settings of the four coefficients. 
\begin{figure}[!ht]
\centering
\includegraphics[width = 1\linewidth]{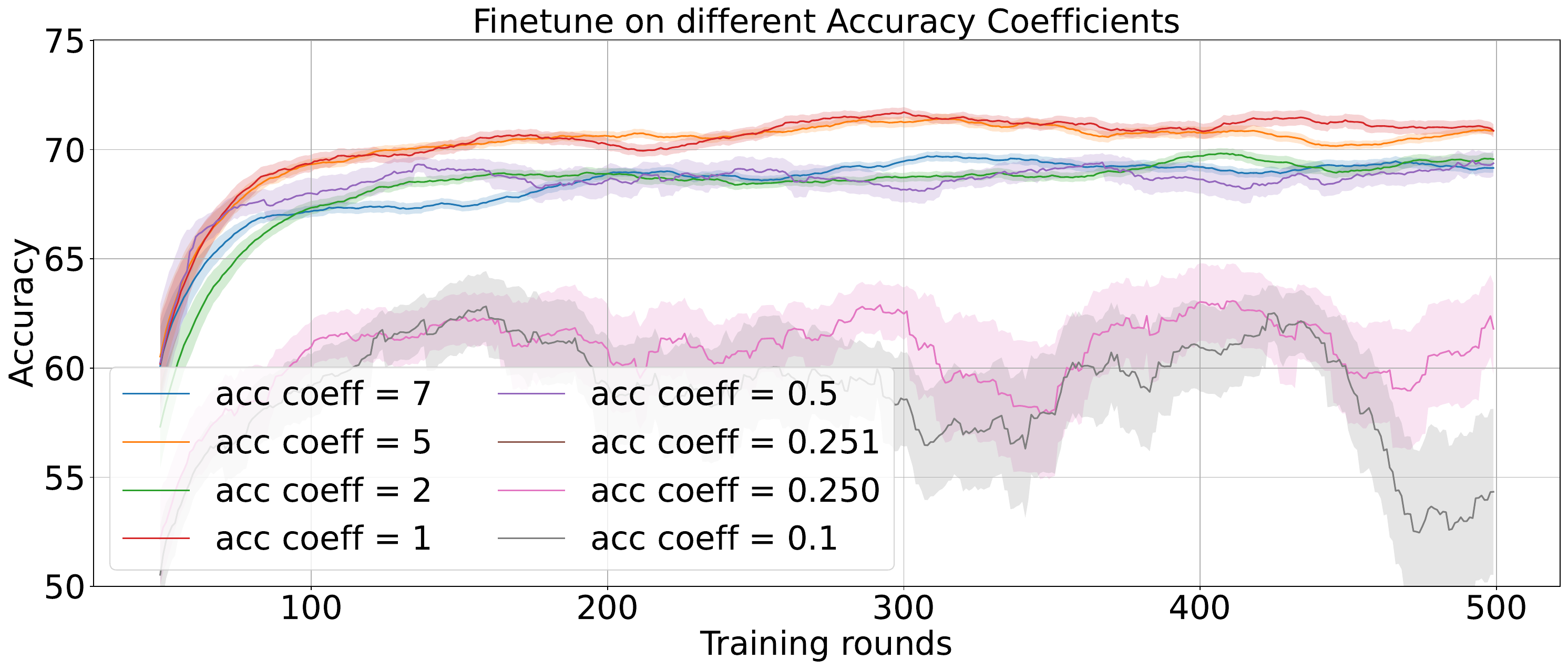} \\
\includegraphics[width = 0.49\linewidth]{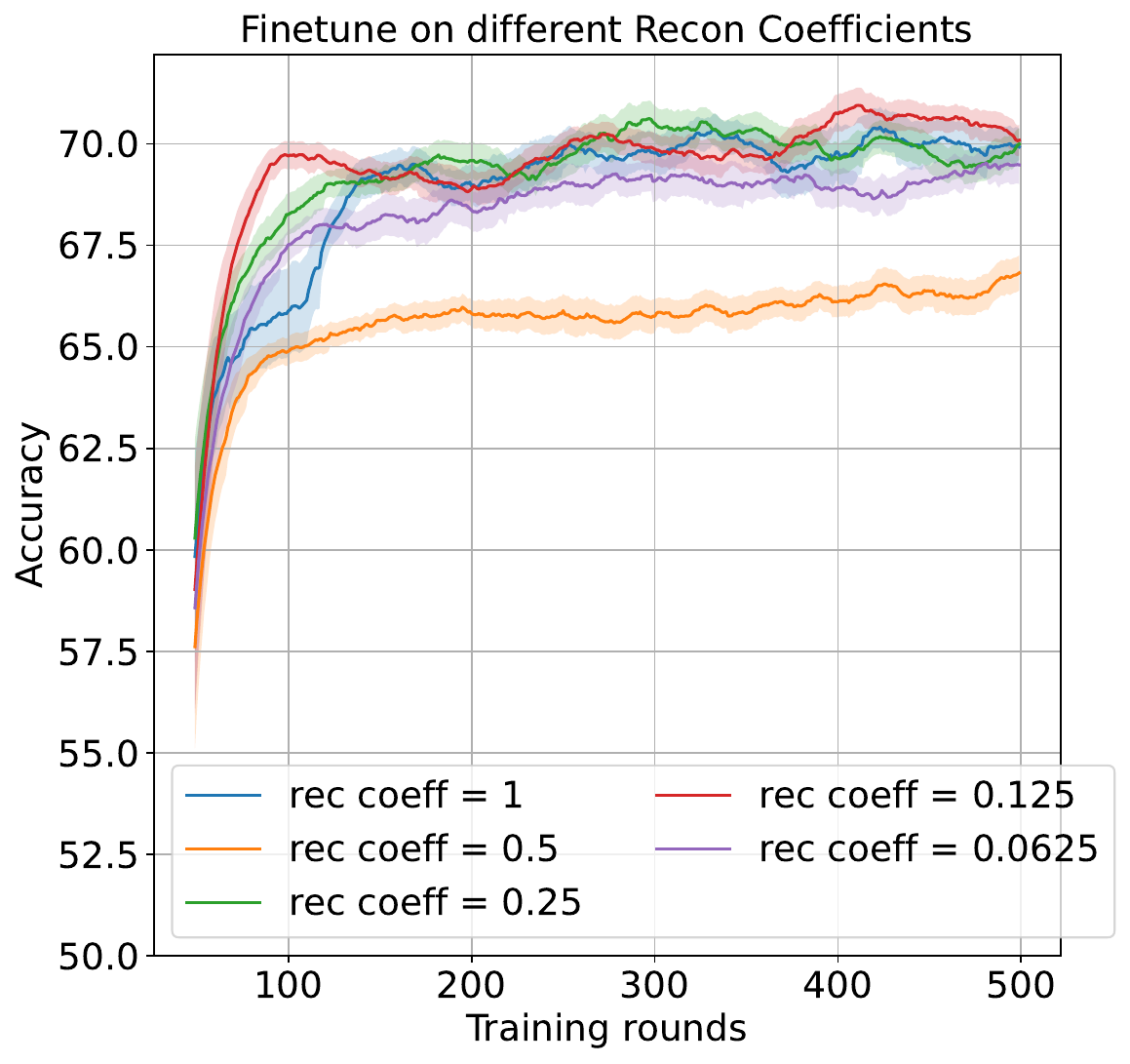} 
\includegraphics[width = 0.47\linewidth]{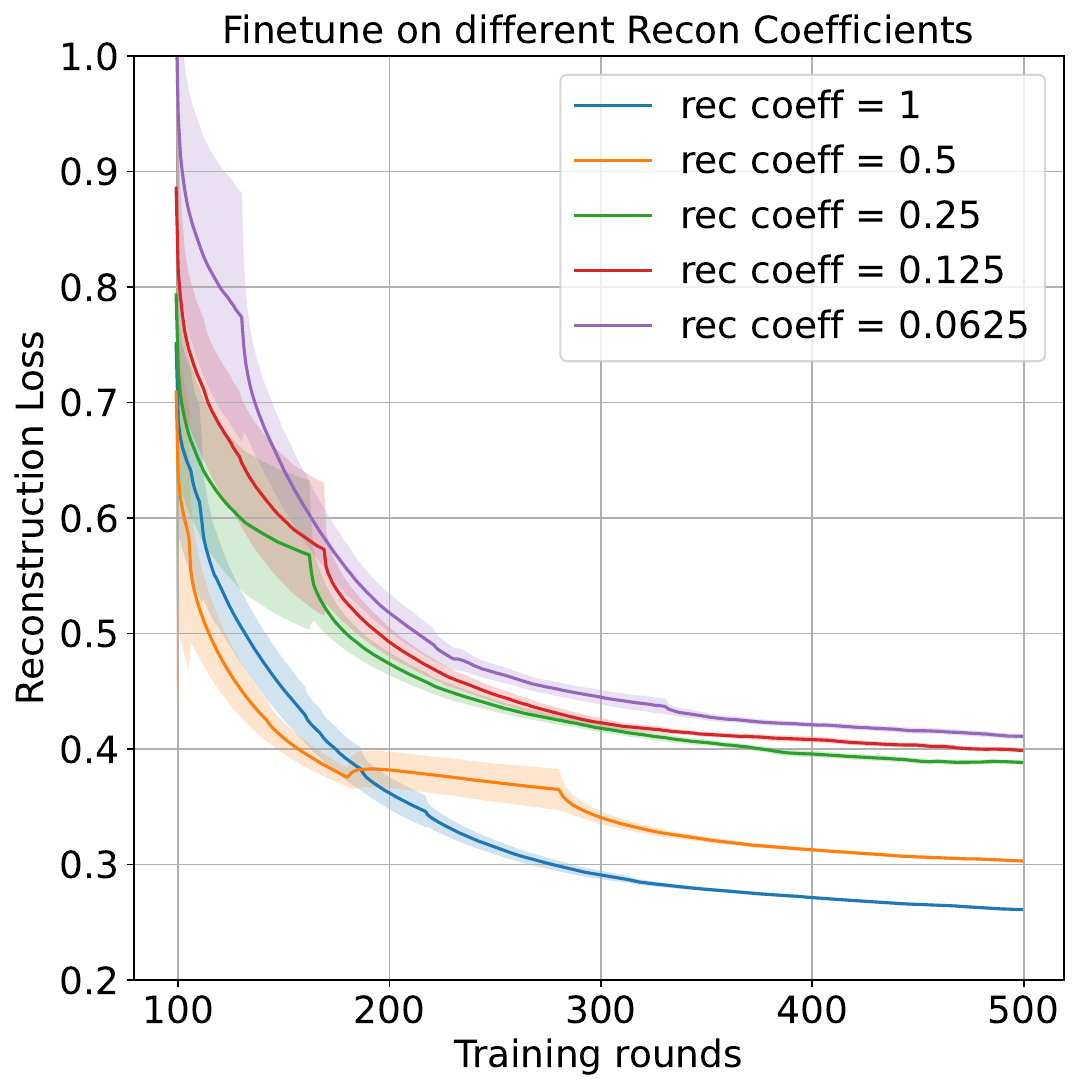} \\
\includegraphics[width = 0.47\linewidth]{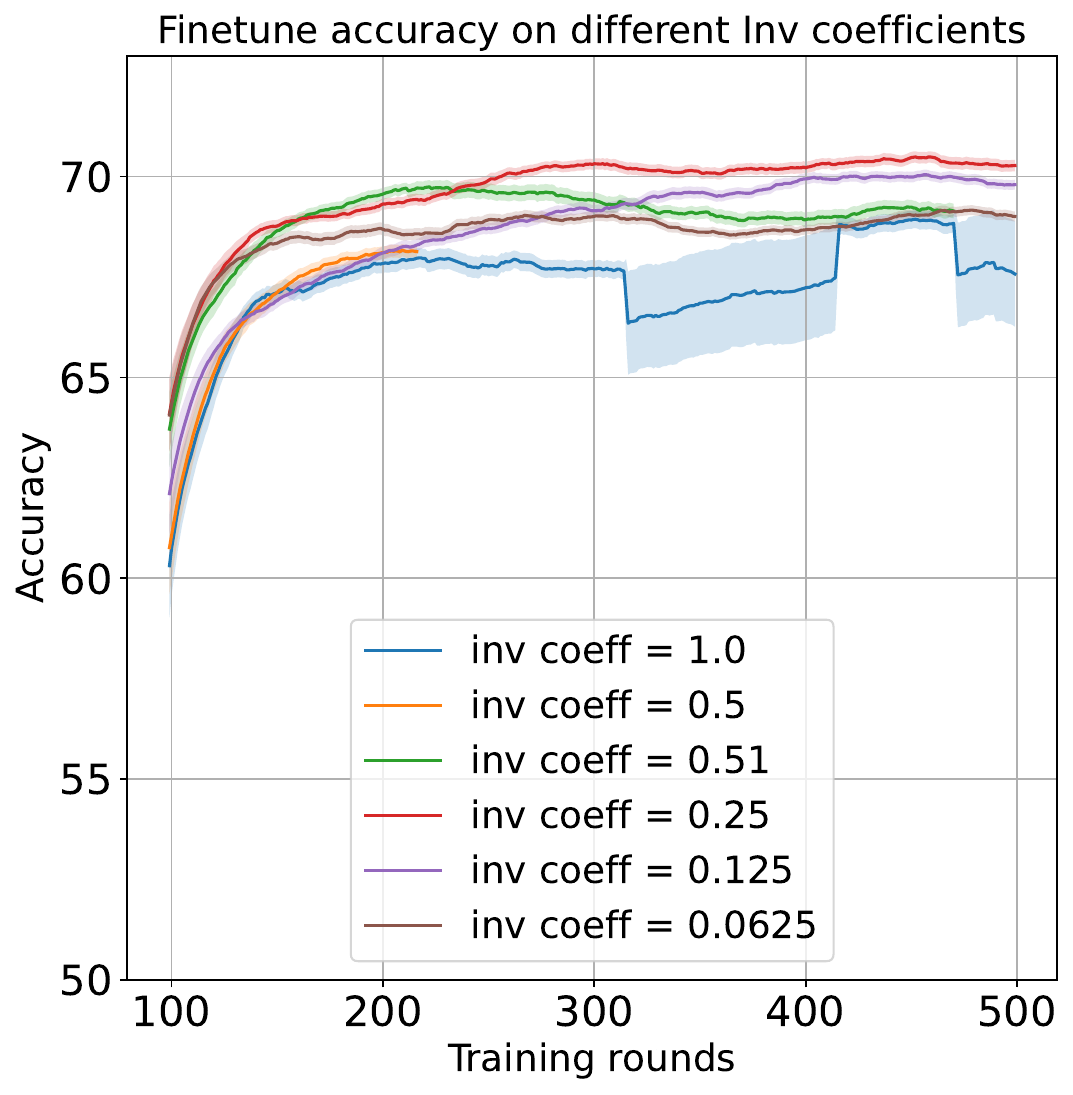} 
\includegraphics[width = 0.5\linewidth]{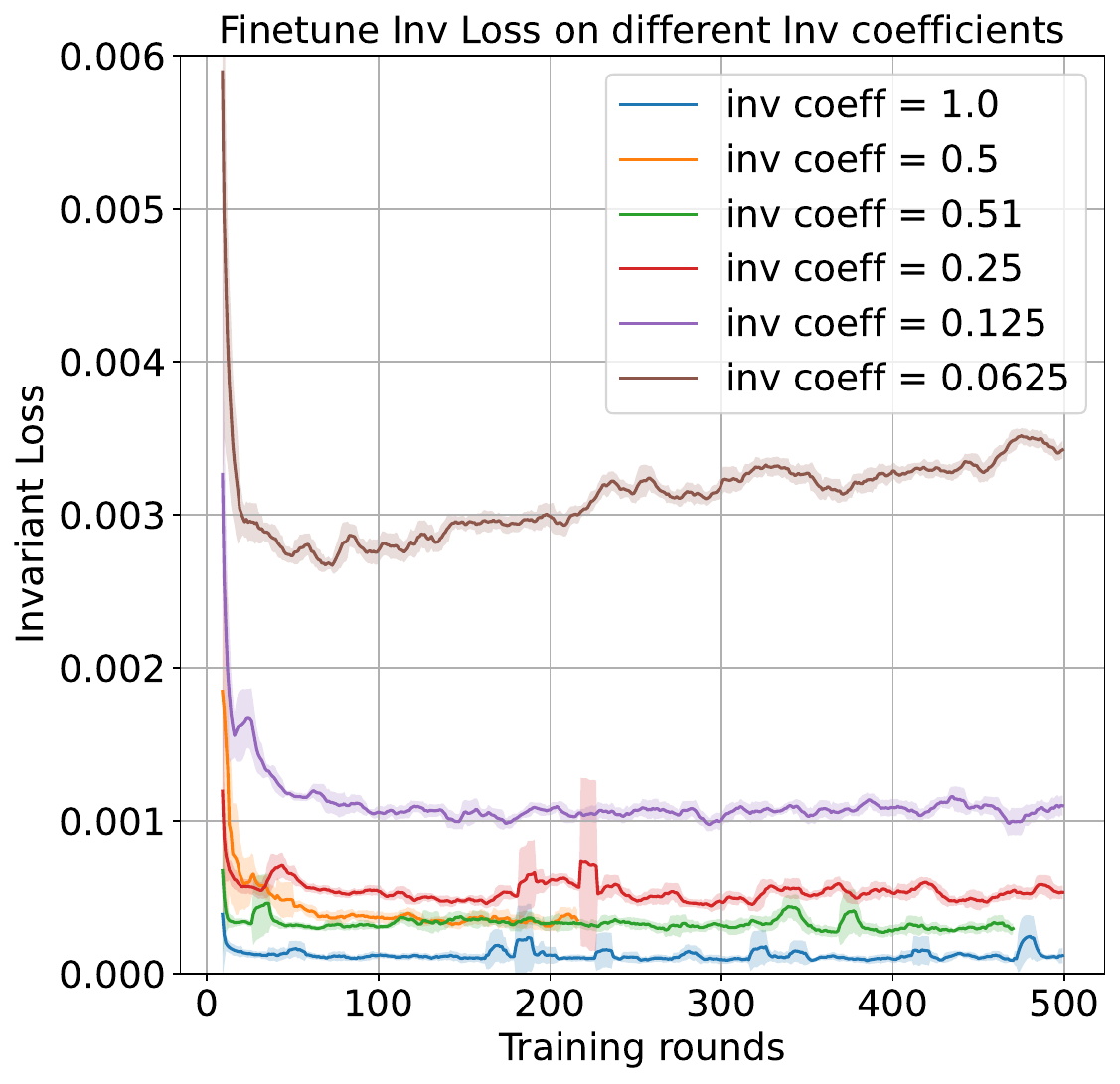} \\
\includegraphics[width = 0.47\linewidth]{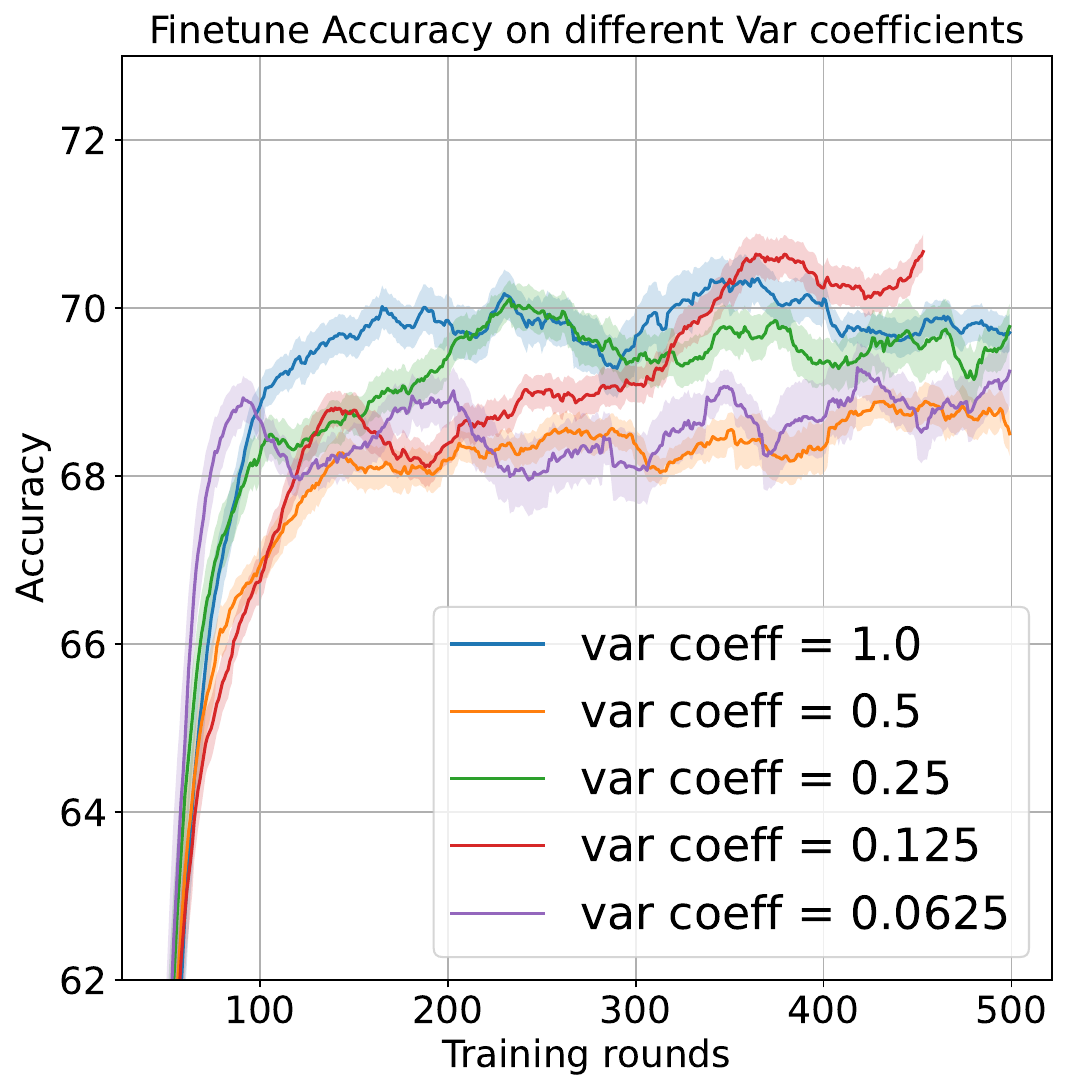} 
\includegraphics[width = 0.50\linewidth]{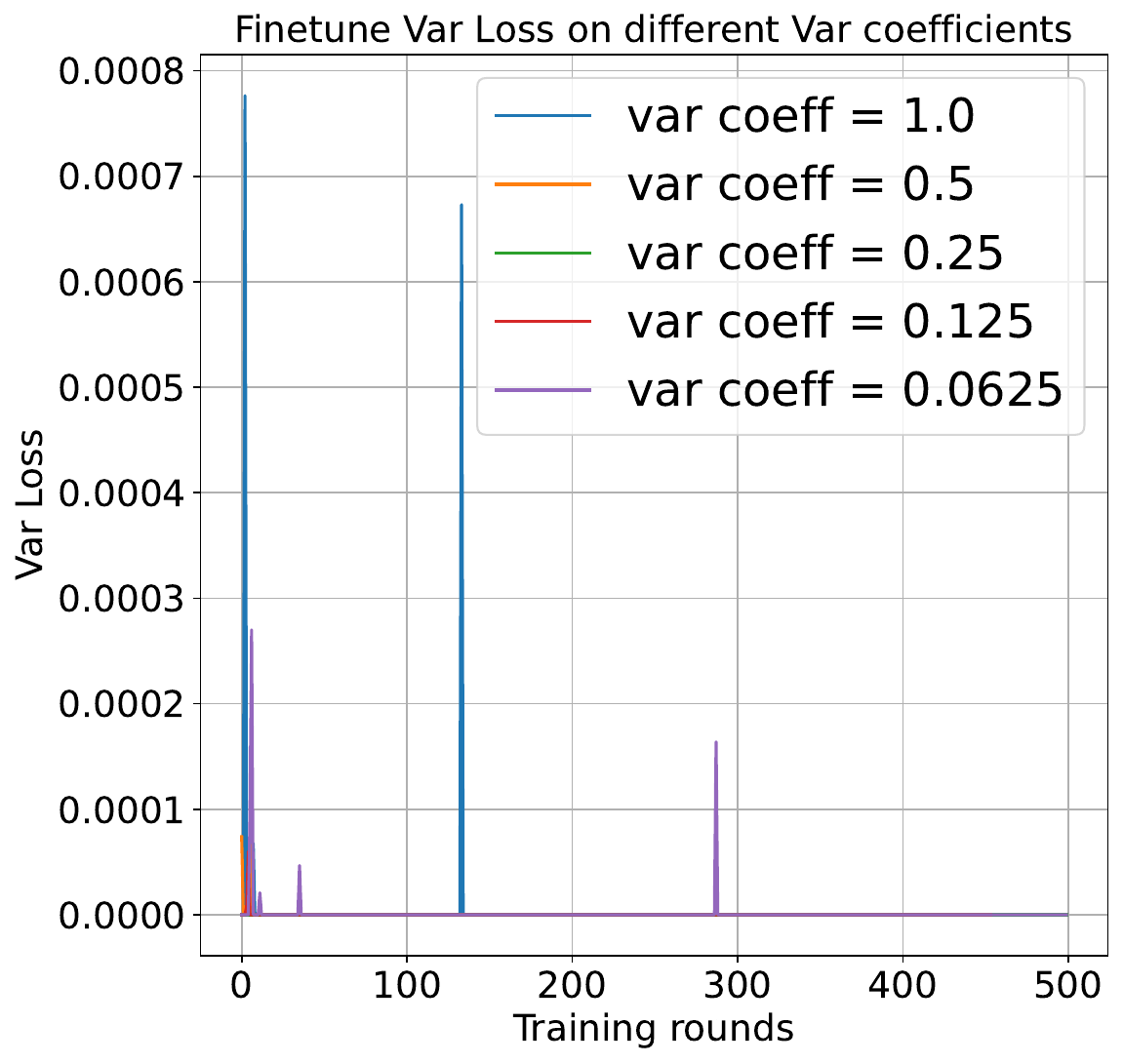} 
\caption{Training of UKIE under different learning coefficient settings.}
\label{fig:learning-coeff}
\end{figure}
First, we assess the impact of varying the label classification loss coefficient ($\alpha_\textrm{gtc}$) on the UKIE training performance. Fig.~\ref{fig:learning-coeff} shows that as $\alpha_\textrm{gtc}$ increases, the ability of $z^{(i)}_{V}$ to extract significant and meaningful information also increases. However, beyond a certain point, specifically when $\alpha_\textrm{gtc}\geq 1$, the accuracy improvement plateaus, showing little change. Consequently, we set $\alpha_\textrm{gtc}=1$ as the optimal coefficient for further adjustments.

Secondly, we assess the performance of the UKIE reconstruction property by varying the reconstruction coefficient $\alpha_\textrm{rec}$. The figure illustrates that as $\alpha_\textrm{gtc}$ increases, the ability of the invariant to extract significant and meaningful information also increases. However, beyond a certain point, specifically when $\alpha_\textrm{gtc}\geq 1$, the accuracy improvement plateaus, showing little change. Consequently, we set $\alpha_\textrm{gtc}=1$ as the optimal value of the coefficient for further adjustments to $\alpha_\textrm{rec}$.

Thirdly, we evaluate the effectiveness of UKIE invariant representation learning by adjusting the invariant coefficient $\alpha_\textrm{iv}$. As depicted in the figure, the quality of the invariant representation, as indicated by the invariant loss and accuracy in predicting invariant classes, diminishes as $\alpha_\textrm{iv}$ decreases. Nevertheless, we recognize that the quality variation becomes negligible when $\alpha_\textrm{iv}$ is greater than or equal to 0.25. Consequently, we select $\alpha_\textrm{iv}=0.25$ as the optimal configuration.

When evaluating the variant coefficient $\alpha_\textrm{v}$, we recognize that variant representations can adapt remarkably effectively. Consequently, we understand that setting a low value for $\alpha_\textrm{v}$ can still yield excellent performance in the UKIE context, while also preserving the rapid adaptation capability for other tasks.

\subsubsection{Assessing the domain generalization capabilities of UKIE}
We assess the performance of UKIE in extracting the invariant representations and predicting the invariant representations. By doing so, we can validate the robustness of causality invariant representations against the domain generalization gap. To do so, we leverage the DomainBed library. Specifically, we train three different UKIE models on Colored-MNIST \cite{2020-DG-DomainBed} with three different domains. We apply the evaluation to all other domains. Table~\ref{tab:ukie-domain} reveals the assessment of UKIE on different domains. 

The results from Table~\ref{tab:ukie-domain} reveal two primary criteria. Firstly, the invariant representations $z_K$ bring a lot of causal information, which leads to the prediction of the source data's ground truth. This is aligned with the SCM. Secondly, the data invariant representations are consistent among different data domains. Therefore, we can assume that we can apply the sparse update of semantic knowledge among distributed devices as mentioned in Section~\ref{sec:semantic-channel} without significant divergence among devices' semantic knowledge.
% \section{Broader Impact and Limitation}
% Our work knowledge training is based on the training overall dataset distribution. Thus, the UKIE needs to be trained on the huge dataset before needs to be transmitted over the SemCom system. 
% Thus, in our future works, we aim to design a joint progressive knowledge extraction and transmission by using closed-loop transcription \cite{2022-DR-CTRL}.

\section{Conclusion}
\label{sec:conclusion}
This paper introduces a novel algorithm for extracting semantic knowledge, i.e., Unified Knowledge retrieval via Invariant Extractor, without manual human intervention, utilizing a structured causal model. It identifies two key aspects of data: invariant and variant components, with the former being crucial for accurate predictions. The data extracted from UKIE remain consistent with the data's ground truth and do not need frequent updates, making it ideal for knowledge-aided semantic channels. Various numerical experiments demonstrate that this invariant knowledge significantly aids in data reconstruction while reducing the information load on the physical channel.

% \appendix
% \subsection{Proof on Lemma~\ref{lemma:data-variant-property}}\label{app:lemma-1}
% \begin{align}
%     \var(z) = \frac{1}{Nd_z}\sum^{N}_{i=1}\Big\Vert z^{i} - z^{i}_K\Big\Vert^2,
% \end{align}
% where $l$ is the component index of the representation, $N$ is the number of data. Here, we use the $z_K$ also the mean value of the data due to Eq.~\ref{eq:UKIE-invariant}. As the UKIE extracts the data into two type of components $z_K$ and $z_V$, we have: 
% \begin{align}
%     \var(z) 
%     &= \frac{1}{Nd_z}\sum^{N}_{i=1} \Bigg[
%       \Big\Vert z^{i}_V - z^{i}_K\Big\Vert^2 
%     + \Big\Vert z^{i}_K - z^{i}_K\Big\Vert^2\Bigg]
%     \\
%     &= \frac{d_{z_V}}{Nd_z}\sum^{N}_{i=1} \Bigg[
%       \frac{1}{d_{z_V}}\Big\Vert z^{i}_V - z^{i}_K\Big\Vert^2\Bigg]
%      = \frac{d_{z_V}}{d_z}\var(z_V). \notag
% \end{align}
% Therefore, we have $\var(z_V)\geq \var(z)$.

% \subsection{Proof on Lemma~\ref{lemma:meaningful-representations}}\label{app:lemma-2}
% The aggregated representation at the receiver's end can be considered as $z_K\cup z_V$. Thus, we have the following information measurement on the aggregated representation: 
% \begin{align}
%     H(z_K\cup z_V) 
%     &= H(z_K) + H(z_V) - I(z_K;z_V) \notag\\
%     &\overset{(a)}{\leq} H(z_K) + H(z_V).
% \end{align}
% Here, the equality $(a)$ holds when $I(z_K;z_V)=0$, which means two representations are independent.
\balance
\bibliographystyle{IEEEtran}
\bibliography{UKIE.bib}
\end{document}